\definecolor{lightgray}{rgb}{0.93, 0.93, 0.93}
\def\1{\bm{1}}
\DeclareMathAlphabet{\mathsfit}{\encodingdefault}{\sfdefault}{m}{sl}
\SetMathAlphabet{\mathsfit}{bold}{\encodingdefault}{\sfdefault}{bx}{n}
\DeclareMathOperator*{\argmin}{arg\,min}
\newcommand{\defeq}{\vcentcolon=}
\newtheorem{theorem}{Theorem}[section]
\newtheorem{corollary}{Corollary}[theorem]
\newtheorem{definition}[theorem]{Definition}
\title{TAID: Temporally Adaptive Interpolated Distillation for Efficient Knowledge Transfer in Language Models}
\author{Makoto Shing\textsuperscript{\rm 1},
Kou Misaki\textsuperscript{\rm 1},
Han Bao\textsuperscript{\rm 2},
Sho Yokoi\textsuperscript{\rm 3}\textsuperscript{\rm 4}\textsuperscript{\rm 5},
Takuya Akiba\textsuperscript{\rm 1} \\
\textsuperscript{\rm 1}Sakana AI,
\textsuperscript{\rm 2}Kyoto University,
\textsuperscript{\rm 3}NINJAL,
\textsuperscript{\rm 4}Tohoku University,
\textsuperscript{\rm 5}RIKEN \\
\texttt{\{mkshing,kou.misaki,takiba\}@sakana.ai}, \texttt{bao@i.kyoto-u.ac.jp}, \\\texttt{yokoi@ninjal.ac.jp}
\\
}
\begin{document}

\maketitle
\begin{abstract}
Causal language models have demonstrated remarkable capabilities, but their size poses significant challenges for deployment in resource-constrained environments. Knowledge distillation, a widely-used technique for transferring knowledge from a large teacher model to a small student model, presents a promising approach for model compression.
A significant remaining issue lies in the major differences between teacher and student models, namely the substantial capacity gap, mode averaging, and mode collapse, which pose barriers during distillation.s
To address these issues, we introduce \textit{Temporally Adaptive Interpolated Distillation (TAID)}, a novel knowledge distillation approach that dynamically interpolates student and teacher distributions through an adaptive intermediate distribution, gradually shifting from the student's initial distribution towards the teacher's distribution. We provide a theoretical analysis demonstrating TAID's ability to prevent mode collapse and empirically show its effectiveness in addressing the capacity gap while balancing mode averaging and mode collapse.
Our comprehensive experiments demonstrate TAID's superior performance across various model sizes and architectures in both instruction tuning and pre-training scenarios. Furthermore, we showcase TAID's practical impact by developing two state-of-the-art compact foundation models: \texttt{TAID-LLM-1.5B} for language tasks and \texttt{TAID-VLM-2B} for vision-language tasks.
These results demonstrate TAID's effectiveness in creating high-performing and efficient models, advancing the development of more accessible AI technologies. 
\end{abstract}

\section{Introduction}
\label{sec:intro}
\paragraph{Large language models are too large.}
Causal language models (\textbf{LMs}) are increasingly becoming essential tools across various sectors~\citep{Malinka_2023,wu2023bloomberggptlargelanguagemodel,zhang2023smallstepgenerativeai,he2024surveylargelanguagemodels}. Scaling data size, model size, and training steps has been the primary approach to improve LM performance~\citep{kaplan2020scalinglawsneurallanguage,Chinchilla,openai2024gpt4technicalreport}, leading to rapid advancements in both proprietary and open-source LMs~\citep{touvron2023llama2openfoundation,abdin2024phi3technicalreporthighly,yang2024qwen2technicalreport}. 
However, the success of large LMs creates challenges: they are too large for edge devices~\citep{qu2024mobileedgeintelligencelarge,thawakar2024mobillamaaccuratelightweightfully,liu2024mobilellmoptimizingsubbillionparameter}, have decoding times too long for real-time applications~\citep{wan2023efficient,leviathan2023fast,MiaoSpecInfer2024}, and consume significant energy resources~\citep{JMLR:v24:23-0069,faiz2024llmcarbonmodelingendtoendcarbon}. This paradox of scale hinders the widespread deployment and use of LMs despite their potential and high demand.
\paragraph{Knowledge distillation offers a promising prescription.}
One promising approach to developing compact yet high-performing models is knowledge distillation (\textbf{KD})~\citep{hinton2015distilling}.
KD aims to transfer the knowledge, specifically the predicted distributions, from a well-trained, high-capacity teacher model to a more compact student model, often achieving better performance than small models trained solely~\citep{model-compression,NIPS2014_ea8fcd92,hinton2015distilling}.
In the context of compressing large LMs, KD is becoming a mainstream approach, with many specialized KD methods actively being developed~\citep{xu2024surveyknowledgedistillationlarge,gemmateam2024gemma2improvingopen,minitron2024}.

\paragraph{The formidable, unresolved challenge of teacher-student differences.}
Nevertheless, KD is not a flawless method, and two significant issues remain, both stemming from the differences between teacher models and the student models.

(i) \emph{Capacity gap} ---
the substantial capacity gap between a large teacher model and compact student model makes effective knowledge transfer more difficult~\citep{mirzadeh2019improvedknowledgedistillationteacher,cho2019efficacyknowledgedistillation,zhang2023liftingcursecapacitygap}. As LMs continue to grow in size and complexity, this capacity gap becomes increasingly pronounced, making it even more challenging to distill knowledge effectively.
(ii) \emph{Mode averaging and mode collapse} ---
due to the disparity in model capacity,
KD methods often struggle with mode-averaging and mode-collapse issues, where student models either fail to oversmooth rich output distributions of a teacher model or
become overly focused on specific modes~\citep{wen2023fdivergence,gu2024minillmknowledgedistillationlarge,agarwal2024onpolicy}.

\begin{figure}[t]
    \centering
    \includegraphics[width=1.\textwidth]{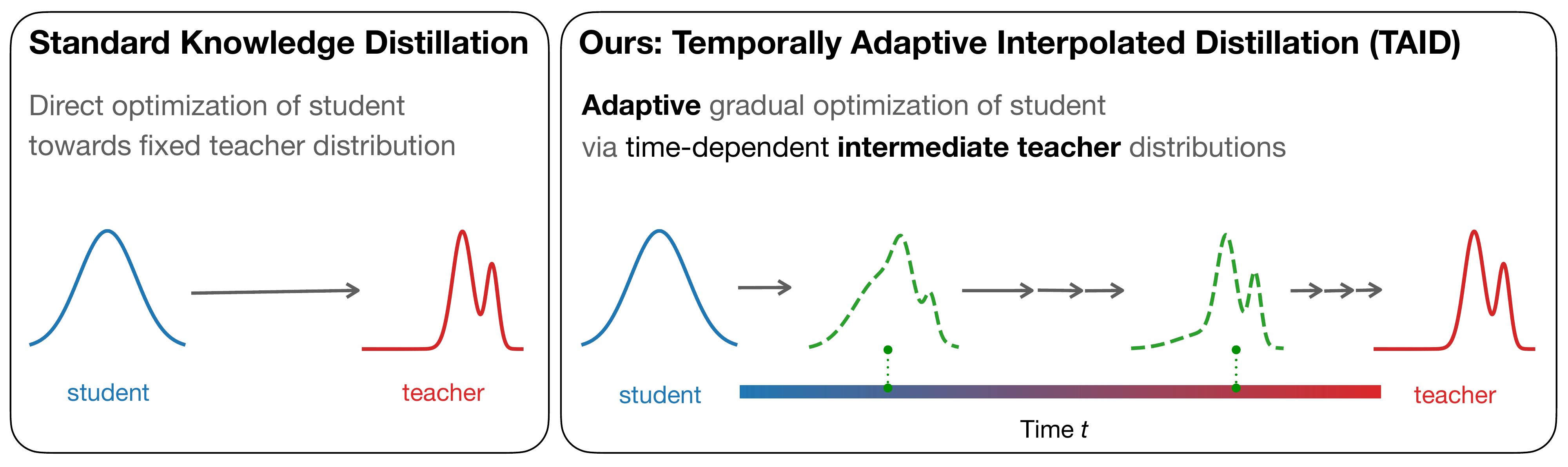}
    \caption{\textbf{Comparison of standard KD and TAID.} \textbf{(Left)} Standard KD methods typically employ direct optimization towards a fixed teacher distribution. \textbf{(Right)} TAID creates a dynamic bridge through adaptive, time-dependent \emph{intermediate teacher} distributions (green dashed lines), enabling gradual optimization of the student. This approach facilitates a flexible transition from the student's initial distribution towards the teacher's distribution over time, effectively addressing the capacity gap and balancing knowledge transfer across varying model sizes.}
    \label{fig:overview}
\end{figure}

\paragraph{A new method to overcome the teacher-student difference.}
To overcome the fundamental issue of differences between teacher and student models,
we introduce Temporally Adaptive Interpolated Distillation (\textbf{TAID}), a new approach to KD for LMs.
TAID reduces the gap between teacher and student model throughout the training process by dynamically introducing an \emph{intermediate teacher} that interpolates teacher and student models to provide a target distribution with a modest capability (see Figure~\ref{fig:overview}).
This simple technique allows for learning a higher-quality student model than with existing KD methods (Section~\ref{sec:experiments}),
scales student's performance with teacher's size even under large capacity gaps (Section~\ref{sec:analysis-capacity-gaps}),
and suppresses mode-averaging and mode-collapse issues theoretically and empirically (Section~\ref{sec:theory} and \ref{sec:analysis-balancing-mode}).

Our main contributions to this paper are as follows:
\setlength{\leftmargini}{10pt}
\begin{itemize}
    \item We introduce TAID (Section~\ref{sec:method}), a new knowledge distillation method that reimagines the distillation process as a dynamic, adaptive knowledge transfer from student to teacher distributions. This approach addresses common challenges in distilling large language models.

    \item We provide a theoretical analysis of TAID (Section~\ref{sec:theory}) with a regression model as a proxy to the language modeling objective, demonstrating its ability to prevent mode collapse in the distillation process. 
    This theoretical guarantee sets TAID apart from traditional self-distillation methods, which can suffer from mode collapse.
    
    \item We conduct extensive experiments (Section~\ref{sec:experiments}) across various model sizes and architectures, demonstrating TAID's superiority in both instruction tuning and pre-training scenarios. Moreover, we experimentally reveal TAID's robustness to capacity gaps (Section~\ref{sec:analysis-capacity-gaps}), and its ability to balance between mode averaging and mode collapse, unlike existing KD methods (Section~\ref{sec:analysis-balancing-mode}).
    
    \item We demonstrate TAID's practical impact by developing two state-of-the-art compact models (Section~\ref{sec:sota}): \texttt{TAID-LLM-1.5B} achieves the best performance for language models under 2B parameters, while \texttt{TAID-VLM-2B} outperforms vision-language models up to 4B parameters, showcasing TAID's effectiveness across different domains.

\end{itemize}

\section{Preliminaries}
\label{sec:preliminaries}

\paragraph{Problem setting for language model distillation.}

A language model is defined as a probability distribution $p$ over token sequences $\mathbf{y} = (y_1, y_2, \dots, y_S) \in \mathcal{Y}^S$, where $\mathcal{Y}$ is the vocabulary and $S$ is the sequence length. 
The distribution is obtained by applying the softmax function to logit values: $p(y_s \mid y^{<s}) = \mathrm{softmax}(\mathrm{logit}_p(y_s \mid y^{<s})) = \nicefrac{\exp(\mathrm{logit}_p(y_s \mid y^{<s}))}{\sum_{y' \in \mathcal{Y}} \exp(\mathrm{logit}_p(y' \mid y^{<s}))}$.
The model satisfies the autoregressive property: $p(\mathbf{y}) = \prod_{s=1}^S p(y_s \mid y^{<s})$
% \begin{equation}
%     p(\mathbf{y}) = \prod_{s=1}^S p(y_s \mid y^{<s})
% \end{equation}
where $y^{<s} := (y_1, y_2, \dots, y_{s-1})$, and $p(y_s \mid y^{<s}) = p(y_1)$ for $s=1$.
In KD for language models, we aim to transfer knowledge from a well-trained teacher model $p$ to a parametric student model $q_\theta$. The objective is to find parameters $\theta$ that minimize a distance measure $J$ between their distributions.

\paragraph{Traditional knowledge distillation approaches.}
\label{sec:traditional_kd}

\citet{hinton2015distilling} introduced KD using the Kullback--Leibler (KL) divergence, which is formulated for language models as:
$J_{\mathrm{KL}}(p, q_\theta) \defeq \frac{1}{S} \sum_{s=1}^S \sum_{y_s \in \mathcal{Y}} p(y_s \mid y^{<s}) \log \frac{p(y_s \mid y^{<s})}{q_\theta(y_s \mid y^{<s})}$.
% \begin{equation}
%     J_{\mathrm{KL}}(p, q_\theta) \defeq \frac{1}{S} \sum_{s=1}^S \sum_{y_s \in \mathcal{Y}} p(y_s \mid y^{<s}) \log \frac{p(y_s \mid y^{<s})}{q_\theta(y_s \mid y^{<s})}.
% \end{equation}
However, KD based on the standard KL divergence often suffers from the \emph{mode-averaging} problem, where a student model attempts to aggressively cover all modes of a teacher distribution despite being incapable, potentially resulting in a oversmoothed and less accurate distribution~\citep{wen2023fdivergence,gu2024minillmknowledgedistillationlarge}. 
To address this, \citet{wen2023fdivergence} proposed using the Reverse KL (RKL) divergence: $J_{\mathrm{RKL}}(p, q_\theta) \defeq J_{\mathrm{KL}}(q_\theta, p)$.
% \begin{equation}
%     J_{\mathrm{RKL}}(p, q_\theta) \defeq \frac{1}{S} \sum_{s=1}^S \sum_{y_s \in \mathcal{Y}} q_\theta(y_s \mid y^{<s}) \log \frac{q_\theta(y_s \mid y^{<s})}{p(y_s \mid y^{<s})}
% \end{equation}
While this approach mitigates the mode-averaging problem, it can lead to \emph{mode collapse}, where the student model focuses only on the dominant modes of the teacher distribution.

\paragraph{Curse of capacity gap.}

\citet{mirzadeh2019improvedknowledgedistillationteacher}, \citet{cho2019efficacyknowledgedistillation}, and \citet{zhang2023liftingcursecapacitygap} reported a curse of capacity gap, where an excessively large model can negatively impact the performance of the student model.
This phenomenon poses a significant challenge in KD, particularly for language models. As state-of-the-art language models continue to grow in size and complexity, the capacity gap becomes increasingly critical in developing high-performing and compact student models. Addressing the capacity gap is crucial for effectively transferring knowledge from large-scale language models to more portable ones without sacrificing performance.
Our experiments (Section~\ref{sec:analysis-capacity-gaps}) provide empirical evidence of the capacity gap and demonstrate how our proposed method addresses this challenge.
\section{Proposed method: TAID}
\label{sec:method}
We introduce Temporally Adaptive Interpolated Distillation (TAID), a novel knowledge distillation method for large language models. TAID uses a dynamic, time-dependent intermediate teacher to bridge the gap between student and teacher models (see Figure~\ref{fig:overview}). This approach facilitates smoother knowledge transfer, addressing the capacity gap and balancing mode-averaging and mode-collapse issues. We show how TAID mitigates these issues in Sections~\ref{sec:analysis-capacity-gaps} and \ref{sec:analysis-balancing-mode}, respectively.

\subsection{Temporally interpolated distribution}
The key idea behind TAID is to employ a time-dependent intermediate teacher to bridge the gap between student and teacher models. 
We formally define the intermediate distribution as follows:
\begin{definition}[TAID Interpolated Distribution]
For any input sequence $y^{<s} \in \mathcal{Y}^{s-1}$ and any output token $y_s \in \mathcal{Y}$, the TAID interpolated distribution $p_t$ is defined as:
\begin{equation}
    p_t(y_s|y^{<s}) \defeq \mathrm{softmax}\left((1-t) \cdot \mathrm{logit}_{q'_\theta}(y_s|y^{<s}) + t \cdot \mathrm{logit}_{p}(y_s|y^{<s})\right)
    \label{eq:TAID-dist}
\end{equation}
where $t \in [0, 1]$ is a time-dependent interpolation parameter, $\mathrm{logit}_{q'_\theta}$ represents a detached version of the student logits (i.e., treated as a constant without being backpropagated), and $\mathrm{logit}_{p}$ represents the teacher logits.
\end{definition}
The interpolation is performed at the logit level to preserve relative confidence between predictions.
The TAID objective function with the interpolation parameter $t$ is defined as the KL divergence between the intermediate distribution $p_t$ and the student distribution $q_\theta$:
\begin{definition}[TAID Objective]
The TAID objective function at time $t$ is defined as:
\begin{equation}
    J^{(t)}_{\mathrm{TAID}}(p, q_\theta) \defeq J_{\mathrm{KL}}(p_t, q_\theta) = \frac{1}{S} \sum_{s=1}^S \sum_{y_s \in \mathcal{Y}} p_t(y_s|y^{<s}) \log\frac{p_t(y_s|y^{<s})}{q_\theta(y_s|y^{<s})}.
    \label{eq:TAID}
\end{equation}
\end{definition}
We gradually increase the interpolation parameter $t$ from $0$ to $1$ during training so that the intermediate distribution $p_t$ adaptively transitions from the student's initial distribution towards the teacher's distribution. Refer to Section~\ref{sec:adaptive-update} for the scheduling of the interpolation parameter. The detached $q'_\theta$ in $p_t$ ensures that we only optimize the student model $q_\theta$ in the denominator of the KL divergence, effectively treating the intermediate distribution as a target.

The intermediate distribution provides a crucial advantage in addressing the capacity gap and mode-averaging/collapse issues. By smoothly transitioning from the student's initial distribution to the teacher's distribution, TAID facilitates a gradual transfer of knowledge. This approach effectively mitigates issues associated with significant capacity gaps between teacher and student models.
This can be understood as follows: When $t$ is small, the student model is encouraged to focus on its own modes, reinforcing its unique characteristics. In this phase, TAID behaves similarly to self-distillation (using the student model as the teacher), which amplifies generalization by sparsifying the model~\citep{mobahi2020self}. Thus, the student model tends to capture dominant features of the student's distribution. As $t$ increases, the student gradually incorporates the teacher's knowledge, capturing more nuanced and rich signals from the teacher distribution. This balanced approach results in a student model that not only captures the essential knowledge from the teacher but also maintains its ability to generalize effectively. Despite TAID's relevance to self-distillation, the interpolation parameter is essential to avoid mode collapse, which self-distillation cannot escape. We will theoretically demonstrate it in Section~\ref{sec:theory}.

\subsection{Adaptive interpolation parameter update}
\label{sec:adaptive-update}
While TAID demonstrates effectiveness even with a simple linear increase of the interpolation parameter $t$, we propose an adaptive update mechanism to achieve more efficient learning and improved accuracy. The key motivation is to dynamically adjust $t$ based on the student's learning progress.
The adaptive update strategy is designed to aggressively increase $t$ in the early stages when the interpolated distribution $p_t$ is close to the student model $q_\theta$, as the model fitting is not challenging in this phase. As the student model approaches the teacher model, the increase in $t$ becomes more gradual, allowing for careful fitting to the more complex teacher distribution.

Our adaptive update strategy is based on the relative change in the objective function: 
$\delta_n \defeq (J_{\mathrm{TAID}}^{(t_{n-1})} - J_{\mathrm{TAID}}^{(t_n)})/(J_{\mathrm{TAID}}^{(t_{n-1})} + \epsilon)$, where $J_{\mathrm{TAID}}^{(t_n)}$ is the value of the TAID objective function at interpolation parameter $t_n$, $t_n$ is the interpolation parameter at step $n$, and $\epsilon$ is a small constant to prevent division by zero. 
We update $t_n$ using a momentum-based approach to smooth out short-term fluctuations:
$m_n = \beta m_{n-1} + (1-\beta)\delta_n$, where $\beta$ is the momentum coefficient. The interpolation parameter is then updated as:
$t_n \leftarrow \min(1.0, \max(t_{\text{linear}}, t_{n-1} + \alpha \cdot \text{sigmoid}(m_n) \cdot (1-t_{n-1}) ))$, where $\alpha$ is the step size for $t$, and $t_{\text{linear}}$ is a linear increase schedule as a lower bound for $t$. To allow for flexible initialization, $t$ is set to a start value $t_\text{start}$, which is a hyperparameter. 
The complete TAID training procedure is summarized in Algorithm~\ref{alg:TAID} in Appendix~\ref{app:taid_algorithm}.

This update mechanism allows for more aggressive increases in $t$ during the early stages of training when the student is learning rapidly (high $\delta_t$), and more gradual increases as the student model approaches the teacher's complexity (low $\delta_t$). The sigmoid function bounds the update, ensuring stable learning, while the $\max$ and $\min$ operations guarantee a monotonic increase within the predefined range.
A detailed analysis of how different $\alpha$ values affect the behavior of $t$ and the learning dynamics is presented in Section~\ref{sec:taid-behavior}.
\section{Theoretical analysis}
\label{sec:theory}
TAID distills from the intermediate distribution $p_t$, partially containing the student model $q_\theta$ as the mixture component.
This may apparently cause the collapse because student's modes are amplified repeatedly during the fitting recursion.
Such a collapse phenomenon has been theoretically observed for self-distillation, where the teacher and student models are identical~\citep{mobahi2020self}.
We aim to demonstrate that TAID avoids mode collapse, unlike self-distillation.

We borrow the analysis framework of \citet{mobahi2020self} to study least-square regression as a proxy to language modeling.
In each training step, the student model is updated by fitting to the interpolated label $(1-t)y_t+ty_\text{teacher}$, where $y_t$ and $y_\text{teacher}$ are the labels of the current student and teacher models, respectively, and $t$ is the interpolation parameter (being linearly increased) at the current step.
Here, we suppose the student model achieves \emph{$\epsilon$-interpolation} of the training signals so that the regression loss is minimized near-perfectly in each time step.
\begin{theorem}[Non-collapse Nature (Informally)]
  \label{theorem:non_collapse_informal}
  Suppose we run distillation for $T$ steps in total.
  If the teacher model has sufficiently large signals so that the label is at least as large as $\Omega(\sqrt{T\epsilon})$, then the student model does not collapse for any time $t$.
\end{theorem}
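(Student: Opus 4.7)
The plan is to adopt the least-squares kernel-regression proxy of \citet{mobahi2020self}, in which each distillation round is a ridge-regularised kernel fit, and to track how the student's fitted label $\hat{y}_n$ at a fixed training point evolves over the $T$ rounds. Under $\epsilon$-interpolation of the intermediate target $(1-t_n)\hat{y}_{n-1} + t_n y_{\text{teacher}}$, one obtains the one-step identity $\hat{y}_n = (1-t_n)\hat{y}_{n-1} + t_n y_{\text{teacher}} + \eta_n$, where the per-step slack $\eta_n$ is controlled by the fitting loss (in particular, $|\eta_n| = O(\sqrt{\epsilon})$ when $\epsilon$ bounds a squared residual) and $t_n$ is the linearly increasing schedule. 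This reduces the whole question to a scalar linear recursion driven by the teacher signal and the accumulated slack.

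Next, I would unroll the recursion to $\hat{y}_n = \alpha_n \hat{y}_0 + \beta_n y_{\text{teacher}} + E_n$, with $\alpha_n \defeq \prod_{j=1}^n (1-t_j)$, $\beta_n = 1 - \alpha_n = \sum_{j=1}^n t_j \prod_{l=j+1}^n (1-t_l)$ (the noiseless recursion preserves convex combinations, which one verifies by induction), and $E_n \defeq \sum_{j=1}^n \eta_j \prod_{l=j+1}^n (1-t_l)$. For the linear schedule $t_j = j/T$, a standard estimate gives $\alpha_n \le \exp(-n(n+1)/(2T))$, so $\beta_n$ becomes $\Omega(1)$ as soon as $n = \Omega(\sqrt{T})$. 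This isolates the mechanism that distinguishes TAID from self-distillation: under pure self-distillation ($y_{\text{teacher}}$ absent, i.e.\ $t \equiv 0$) the unrolled signal has no anchor term and contracts, recovering the collapse of \citet{mobahi2020self}; under TAID the $\beta_n y_{\text{teacher}}$ term acts as a persistent anchor.

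To bound the noise contribution $|E_n|$ I would exploit the geometric structure of the product-weights $w_j^{(n)} \defeq \prod_{l=j+1}^n (1-t_l)$. A continuous approximation $w_j^{(n)} \approx \exp(-(n^2-j^2)/(2T))$ shows that their mass concentrates on the last $O(\sqrt{T})$ indices and $\sum_j w_j^{(n)} = O(\sqrt{T})$ uniformly in $n$. Combining this with $|\eta_j| = O(\sqrt{\epsilon})$ yields $|E_n| = O(\sqrt{T\epsilon})$. Non-collapse then follows from the triangle inequality $|\hat{y}_n| \ge \beta_n |y_{\text{teacher}}| - \alpha_n |\hat{y}_0| - |E_n|$: under the hypothesis $|y_{\text{teacher}}| = \Omega(\sqrt{T\epsilon})$, with a sufficiently large hidden constant, the teacher contribution dominates once $\beta_n = \Omega(1)$, yielding a uniform-in-$n$ lower bound on $|\hat{y}_n|$.

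The main obstacle I expect is getting the correct $\sqrt{T\epsilon}$ (rather than the weaker $\epsilon\sqrt{T}$) scaling of the noise term: a naive triangle inequality on the raw residuals overcounts, and one must use the fact that $\epsilon$ controls a squared-loss quantity so that the pointwise slack is $O(\sqrt{\epsilon})$, together with the concentration of the product-weights $w_j^{(n)}$ on $O(\sqrt{T})$ effective indices. A secondary technical issue is the early regime $n = o(\sqrt{T})$, where $\beta_n$ is small and the anchor has not yet taken over; there non-collapse must instead be derived from the persistence of $\alpha_n \hat{y}_0$, so the initial student signal $|\hat{y}_0|$ must itself be assumed bounded below in order to glue the two regimes into a single uniform bound.
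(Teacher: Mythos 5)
Your proposal is essentially correct but proves the statement by a genuinely different route from the paper. The paper works with the exact analytical solution of each ridge-regularized fit, $\mathbf{y}_{t+1}=\mathbf{A}_t\tilde{\mathbf{y}}_t$ with $\mathbf{A}_t=\mathbf{D}(\lambda_t\mathbf{I}+\mathbf{D})^{-1}$ diagonal with entries in $[0,1]$, and splits the analysis into two regimes: for late steps it lower-bounds $\sigma_{\min}$ of the unrolled product of shrinkage operators by $t/T$ (giving non-collapse whenever $t>T/r_0$), and for early steps it runs a scalar recursion on $r_t=\|\tilde{\mathbf{z}}_t\|/\sqrt{N\epsilon}$ using a $\kappa$-dependent linear lower bound on the shrinkage factor; the corollary then forces the two regimes to overlap, yielding $\|\mathbf{y}_0\|=\Omega(\sqrt{T(1+\kappa)}\sqrt{N\epsilon})$. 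You instead treat each fit as the target plus adversarial additive noise and control the accumulated noise via the $O(\sqrt{T})$ bound on the sum of the product weights $w_j^{(n)}$ — a coarser model of the fitting step, but a valid one, since the active constraint gives $\|\mathbf{f}-\tilde{\mathbf{y}}\|=\sqrt{N\epsilon}$ exactly, and an adversarial perturbation only weakens a lower bound. Your route is more elementary, avoids the condition number entirely, and — once you notice that in the paper's Algorithm the recursion is initialized at the teacher signal itself, so $\hat{y}_0=y_{\mathrm{teacher}}=\mathbf{y}_0$ and the noiseless part of your unrolled recursion is identically $\mathbf{y}_0$ — it reduces to the one-line bound $\|\mathbf{y}_n-\mathbf{y}_0\|\le\|E_n\|=O(\sqrt{T}\sqrt{N\epsilon})$, which dissolves your "early regime" worry and your extra assumption on $|\hat{y}_0|$. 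What it does not recover is the finer two-sided condition of the paper's Theorem B.2, in particular that non-collapse in the late phase needs only $r_0>T/t$.

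Two fixable points. First, the per-step slack should be measured in the vector $\ell_2$ norm over the $N$ training points, $\|\boldsymbol{\eta}_n\|=\sqrt{N\epsilon}$, not as a pointwise $O(\sqrt{\epsilon})$ bound at a fixed input (a single residual can be as large as $\sqrt{N\epsilon}$); this is also the natural choice because the collapse criterion itself is the norm condition $\|\tilde{\mathbf{y}}\|^2\le N\epsilon$. Second, the identity $\hat{y}_n=\tilde{y}_{n-1}+\eta_n$ with bounded $\eta_n$ only holds when the problem at step $n$ has not already collapsed (otherwise the solution is $f=0$), so the argument should be phrased as an induction on $n$ maintaining $\|\tilde{\mathbf{y}}_{n}\|>\sqrt{N\epsilon}$.
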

Notably, self-distillation inevitably collapses for sufficiently large steps~\citep[Proposition~4]{mobahi2020self}, corroborating the benefit of the intermediate distribution and its adaptive update.
The formal statement and more discussions can be found in Appendix~\ref{app:proof-collapse}.
\section{Related works}
\label{sec:related_works}
\paragraph{Improving objective functions.}
To address the mode-averaging and mode-collapse issues that the traditional KL divergence-based methods (Section~\ref{sec:traditional_kd}) face, various alternative objective functions have been applied to knowledge distillation. \citet{wen2023fdivergence} applied the Total Variation Distance, formulated at the sequence level similar to \citet{kim-rush-2016-sequence}: $J_{\text{TVD}}(p, q_\theta) \defeq \frac{1}{2} \sum_{\mathbf{y}} |p(\mathbf{y}) - q_\theta(\mathbf{y})|.$
\citet{agarwal2024onpolicy} utilized the Generalized Jensen--Shannon (JS) Divergence: $J_{\text{GJSD}}(p, q_\theta) \defeq \lambda J_{\text{KD}}(p, r) + (1 - \lambda) J_{\text{RKD}}(p, r)$,
where $r(\mathbf{y}) = \lambda p(\mathbf{y}) + (1 - \lambda) q_\theta(\mathbf{y})$ and $\lambda \in [0, 1]$. Additionally, \citet{ko2024distillm} employed the Skew KL Divergence: $J_{\text{SKD}}(p, q_\theta) \defeq J_{\text{KL}}(p, r).$
They also defined the Skew Reverse KL Divergence as $J_{\text{SRKD}}(p, q_\theta) \defeq J_{\text{KL}}(q_\theta, r)$.
These approaches aim to balance preserving teacher knowledge and allowing student generalization. However, they typically use a fixed teacher distribution throughout distillation, potentially hindering knowledge transfer when there is a significant capacity gap between teacher and student. In contrast, our TAID method introduces a time-dependent intermediate distribution, gradually transitioning from the student's initial distribution to the teacher's, mitigating the capacity gap issue and enabling more stable learning.
While Skew KL divergence also adopts an intermediate distribution, its approach differs significantly from TAID. Skew KL divergence uses a fixed intermediate distribution and transfers the teacher's knowledge to it, whereas TAID employs a time-dependent intermediate distribution and transfers it to the student. This distinction, particularly the dynamic nature of TAID's intermediate distribution, makes TAID more suitable for adaptive updates of the student model as the interpolation parameter changes over time (see Appendix~\ref{app:skew_kl_comparison} for a detailed comparison).

\paragraph{Utilizing student-generated outputs (SGOs).}

Recent research in KD for language models has explored utilizing on-policy data sampled from teacher and student models during training~\citep{gu2024minillmknowledgedistillationlarge,zhang2024pladpreferencebasedlargelanguage}. Within this approach, some studies have specifically focused on leveraging student-generated outputs (SGOs)~\citep{agarwal2024onpolicy,ko2024distillm}. While these methods show promise in improving distillation performance and addressing the distribution mismatch between training and inference due to the autoregressive nature of LMs when training on a fixed dataset~\citep{pomerleau1991efficienttraining,ross2010efficientreductions}, they are computationally expensive for large-scale models.
TAID achieves superior performance without relying on on-policy data or SGOs, offering improved computational efficiency for large-scale datasets and models (see Section~\ref{sec:instruction_tuning}). 
Future work could explore combining TAID with on-policy approaches to potentially achieve even better performance.

\paragraph{KD methods from image classification.}
KD has been extensively studied in image classification tasks, with some logit-based methods being applicable to language model distillation. Notable examples include CTKD~\citep{li2022curriculumtemperatureknowledgedistillation} and DKD~\citep{zhao2022decoupledknowledgedistillation}, which have shown remarkable performance using standard KL divergence.
CTKD shares a similar curriculum learning approach with TAID, gradually increasing task difficulty. CTKD achieves this through a learnable temperature parameter that modifies both student and teacher distributions. In contrast, TAID modifies only the teacher distribution through interpolation, potentially preserving more of the student's learned information.
DKD decomposes KL divergence into target and non-target class components, allowing for better weight adjustment in tasks of varying difficulty. However, these image classification-based methods are not sufficiently effective in language modeling due to the unique characteristics of the language domain. We experimentally verified it in  Section~\ref{sec:comparison-img}. TAID addresses these challenges through its adaptive interpolation, while remaining flexible enough to be combined with methods like DKD for simpler tasks.
\section{Empirical analysis}
\label{sec:experiments}
We evaluate TAID across instruction tuning and pre-training scenarios, using various model sizes and architectures. 
Our experiments compare TAID against state-of-the-art methods, demonstrating its superior performance and efficiency, while providing insights into its behavior across different capacity gaps and its ability to balance mode-averaging and mode-collapse issues.

\subsection{Instruction tuning}
\label{sec:instruction_tuning}
\begin{table}[t]
    \centering
    \caption{\textbf{Evaluating distillation methods for LLM instruction tuning.} The MT-Bench scores after training are listed, where higher scores indicate better conversational performance. For each of the three teacher-student pairs, different distillation algorithms, including the proposed TAID method, are compared. The highest score in each column is highlighted in \textbf{bold}.}
    \label{tab:language_results}  
    \resizebox{1\textwidth}{!}{
    \begin{tabular}{lrrrr}
    \toprule
    & \textbf{Teacher} & Phi-3-mini \small{(3.8B)} & Llama-2 \small{(6.7B)} & StableLM Zephyr \small{(2.8B)} \\
    \textbf{Method} & \textbf{Student} & TinyLlama \small{(1.1B)} & TinyLlama \small{(1.1B)} & Pythia \small{(0.4B)} \\
    \midrule
    \multicolumn{2}{l}{SFT} & 2.00 & 3.94 & 2.57 \\
    \multicolumn{2}{l}{KL\small{~\citep{hinton2015distilling}}} & 2.71 & 3.99 & 2.74 \\
    \multicolumn{2}{l}{RKL\small{~\citep{wen2023fdivergence,gu2024minillmknowledgedistillationlarge}}} & 3.48 & 3.92 & 2.53 \\
    \multicolumn{2}{l}{TVD\small{~\citep{wen2023fdivergence}}} & 3.27 & 3.64 & 2.57 \\
    \multicolumn{2}{l}{Adaptive KL\small{~\citep{wu2024rethinkingkullbackleiblerdivergenceknowledge}}} & 3.27 & 3.77 & 2.64 \\
    \multicolumn{2}{l}{GKD\small{~\citep{agarwal2024onpolicy}}} & 2.24 & 3.82 & 2.59 \\
    \multicolumn{2}{l}{DistiLLM\small{~\citep{ko2024distillm}}} & 3.23 & 3.97 & 2.97 \\
    \multicolumn{2}{l}{CTKD\small{~\citep{li2022curriculumtemperatureknowledgedistillation}}} & 1.78 & 2.84 & 1.39 \\
    \multicolumn{2}{l}{DKD\small{~\citep{zhao2022decoupledknowledgedistillation}}} & 2.70 & 4.14 & 2.90 \\
    \rowcolor{lightgray}
    \multicolumn{2}{l}{\textbf{(Ours) TAID w/o adaptive update}} & 3.44 & 4.18 & 2.88 \\
    \rowcolor{lightgray}
    \multicolumn{2}{l}{\textbf{(Ours) TAID}} & \textbf{4.05} & \textbf{4.27} & \textbf{3.05} \\
    \bottomrule
    \end{tabular}
    }
\end{table}
\paragraph{Experimental setup.} 
For the instruction-following task, we used the UltraChat 200k dataset~\citep{ding2023ultrachat200k} for training. Performance was assessed using MT-Bench~\citep{zheng2023judgingllmasajudgemtbenchchatbot}, a benchmark designed to evaluate model's instruction-following ability, with scoring conducted by GPT-4.
For our experiments, we utilized three teacher-student pairs: \texttt{Phi-3-mini-4k-instruct} ~\citep{abdin2024phi3technicalreporthighly} as teacher with \texttt{TinyLlama}~\citep{zhang2024tinyllamaopensourcesmalllanguage} as student, \texttt{Llama-2-7b-chat}~\citep{touvron2023llama2openfoundation} as teacher with \texttt{TinyLlama} as student, and \texttt{StableLM Zephyr 3B}~\citep{StableLM-Zephyr-3B} as teacher with \texttt{Pythia-410M}~\citep{biderman2023pythiasuiteanalyzinglarge} as student. 
To evaluate the pure effectiveness of our distillation method, we focused solely on distillation using instruction data, unlike previous studies~\citep{gu2024minillmknowledgedistillationlarge,agarwal2024onpolicy,ko2024distillm} that often perform supervised fine-tuning (SFT) before distillation or include additional cross-entropy loss on pre-training corpora. Furthermore, to simulate a more practical scenario, we used powerful teacher models trained on in-house data with open weights for distillation to smaller student models.
We compared TAID against prior works, including KL divergence~\citep{hinton2015distilling}, RKL~\citep{wen2023fdivergence}, Total Variation Distance (TVD)~\citep{wen2023fdivergence}, Adaptive KL~\citep{wu2024rethinkingkullbackleiblerdivergenceknowledge}, as well as methods utilizing SGOs such as Generalized KD (GKD)~\citep{agarwal2024onpolicy} and DistiLLM~\citep{ko2024distillm}.
Additionally, we included two methods originally proposed for image classification tasks: CTKD~\citep{li2022curriculumtemperatureknowledgedistillation} and DKD~\citep{zhao2022decoupledknowledgedistillation}, to assess their effectiveness in language model distillation. We also included a supervised fine-tuning (SFT) baseline to demonstrate the benefits of knowledge distillation.
To isolate the impact of our adaptive update mechanism, we evaluated TAID both with and without this feature, where TAID without adaptive update uses a linear increase of the interpolation parameter with respect to the training steps.
Detailed hyper-parameters and implementation specifics for TAID and all baseline methods are provided in Appendix~\ref{app:experiment-setup}.

\paragraph{Results.} 
Table~\ref{tab:language_results} presents the MT-Bench scores for all methods across the three different teacher-student pairs. Our proposed TAID method consistently outperforms all baseline methods, including those proposed for image classification (CTKD and DKD) and methods utilizing SGOs such as GKD and DistiLLM. Notably, TAID achieves superior performance without relying on expensive SGO sampling strategies, resulting in significantly faster training times—approximately 2 times faster than DistiLLM and 10 times faster than GKD. This combination of superior performance and computational efficiency, achieved without SGOs, makes TAID particularly attractive for real-world applications where both model quality and training speed are crucial.
An ablation study comparing TAID with and without adaptive updates shows improvements ranging from 2.2\% to 17.7\% across different teacher-student pairs, underlining the importance of our proposed adaptive mechanism.

\subsection{Pre-training}
\label{subsec:pretraining}
\paragraph{Experimental setup.}
Due to the limited resources, we performed continued pre-training, initializing the student model with a pre-trained model and further refining it through additional pre-training using distillation.
We used the first 10\% of the SmolLM-Corpus~\citep{benallal2024smollmcorpus} dataset, amounting to approximately 20 billion tokens. 
We used \texttt{Phi-3-medium-4k-instruct}~\citep{abdin2024phi3technicalreporthighly} as the teacher model and \texttt{TinyLlama} as the student model. Similar to our instruction tuning experiments, we focused solely on distillation without additional supervised fine-tuning or pre-training losses. Due to the computational cost associated with sampling from the student model in large-scale pre-training and the absence of prompts as in instruction-following tasks, we adapted the baseline methods to use only their objective functions without SGOs. We compared TAID against these modified baselines, including KL divergence, TVD, Adaptive KL, GJS (used in GKD), and Skew KL/RKL (used in DistiLLM).
To evaluate the pre-trained models, we followed the Open LLM Leaderboard~\citep{open-llm-leaderboard} methodology, which is commonly used to assess the underlying capabilities of models through few-shot evaluation. This methodology includes six diverse tasks, with evaluation settings and metrics adhering to the Open LLM Leaderboard standards.
Detailed hyperparameters and implementation specifics are provided in Appendix~\ref{app:experiment-setup-pretraining}.

\paragraph{Results.} 
\begin{table}[t]
\centering
\caption{
\textbf{Evaluating distillation methods for LLM continued pre-training}. The Open LLM Leaderboard scores after training are listed, with higher scores indicating better performance. The average score across the 6 tasks (Average column) is commonly used as an indicator of overall language proficiency. The highest score in each column is highlighted in \textbf{bold}.
}

\label{tab:pt_results-full}
\resizebox{1\linewidth}{!}{
\begin{tabular}{lrrrrrr|r}
\toprule
\textbf{Method} & \textbf{ARC} & \textbf{HellaSwag} & \textbf{MMLU} & \textbf{TrustfulQA} & \textbf{Winogrande} & \textbf{GSM8K} & \textbf{Average} \\
\midrule
SFT & 41.38 & 63.66 & 25.89 & 35.64 & 61.25 & 1.21 & 38.17 \\
KL~\citep{hinton2015distilling} & 44.97 & 65.43 & 25.11 & \textbf{37.95} & 63.22 & 2.80 & 39.91 \\
TVD~\citep{wen2023fdivergence} & 43.52 & 64.50 & 25.95 & 36.38 & 63.14 & 2.96 & 39.41 \\
Adaptive KL~\citep{wu2024rethinkingkullbackleiblerdivergenceknowledge} & 43.77 & 63.09 & 26.04 & 36.42 & 63.22 & 2.12 & 39.11 \\
GJS~\citep{agarwal2024onpolicy} & 44.71 & \textbf{65.67} & 25.27 & 37.76 & 62.12 & 3.34 & 39.81 \\
Skew KL~\citep{ko2024distillm} & 44.62 & 65.25 & 25.79 & 37.45 & 62.51 & \textbf{3.41} & 39.84 \\
Skew RKL~\citep{ko2024distillm} & 44.11 & 64.80 & \textbf{26.07} & 36.76 & 62.83 & 3.03 & 39.60 \\
\rowcolor{lightgray}
\textbf{(Ours) TAID} & \textbf{45.48} & 65.43 & 25.43 & 37.92 & \textbf{63.38} & 2.96 & \textbf{40.10} \\
\bottomrule
\end{tabular}
}
\end{table}

Table~\ref{tab:pt_results-full} presents the results of our pre-training experiments. 
Following the standard practice in the LLM community, we reported the average scores across diverse tasks. TAID achieves the highest average score across all six tasks, outperforming all baseline methods. This superior average performance demonstrates TAID's effectiveness in transferring knowledge from the teacher to the student model across a diverse range of tasks.
While TAID shows the best overall performance, it is worth noting that it achieves the highest scores on two individual tasks (ARC and Winogrande) and competitive performance on the others. The consistently strong performance across tasks, coupled with the highest average score, underscores TAID's robustness and effectiveness in knowledge distillation for large language models.

\subsection{Analysis}
\label{sec:analysis}

\begin{figure}[t]
\centering
\begin{minipage}[t]{.33\textwidth}
    \centering
    \includegraphics[width=1.\textwidth]{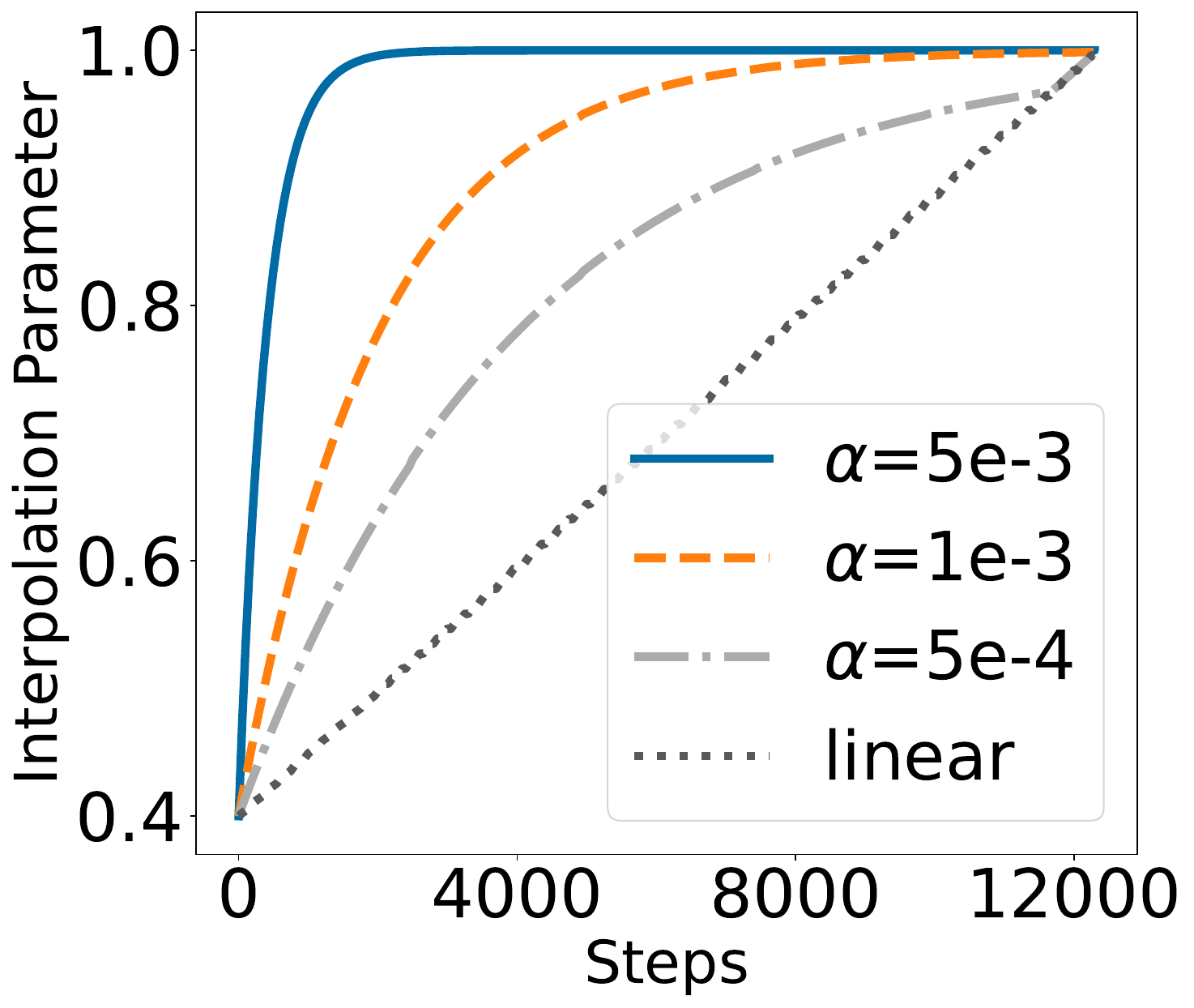}
\end{minipage}
\hfill
\begin{minipage}[t]{.33\textwidth}
    \centering
    \includegraphics[width=1.\textwidth]{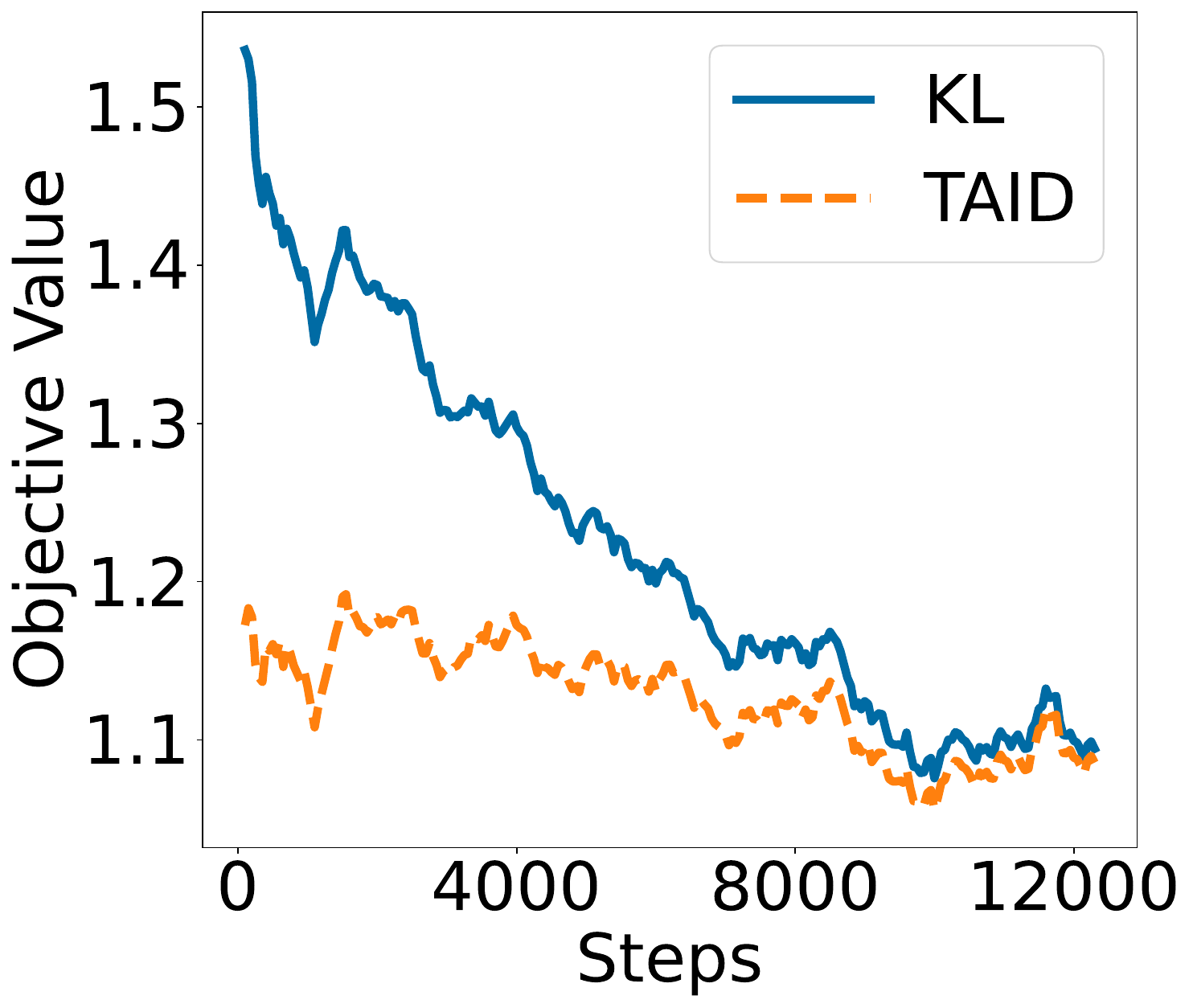}
\end{minipage}
\hfill
\begin{minipage}[t]{.32\textwidth}
    \centering
    \includegraphics[width=1.\textwidth]{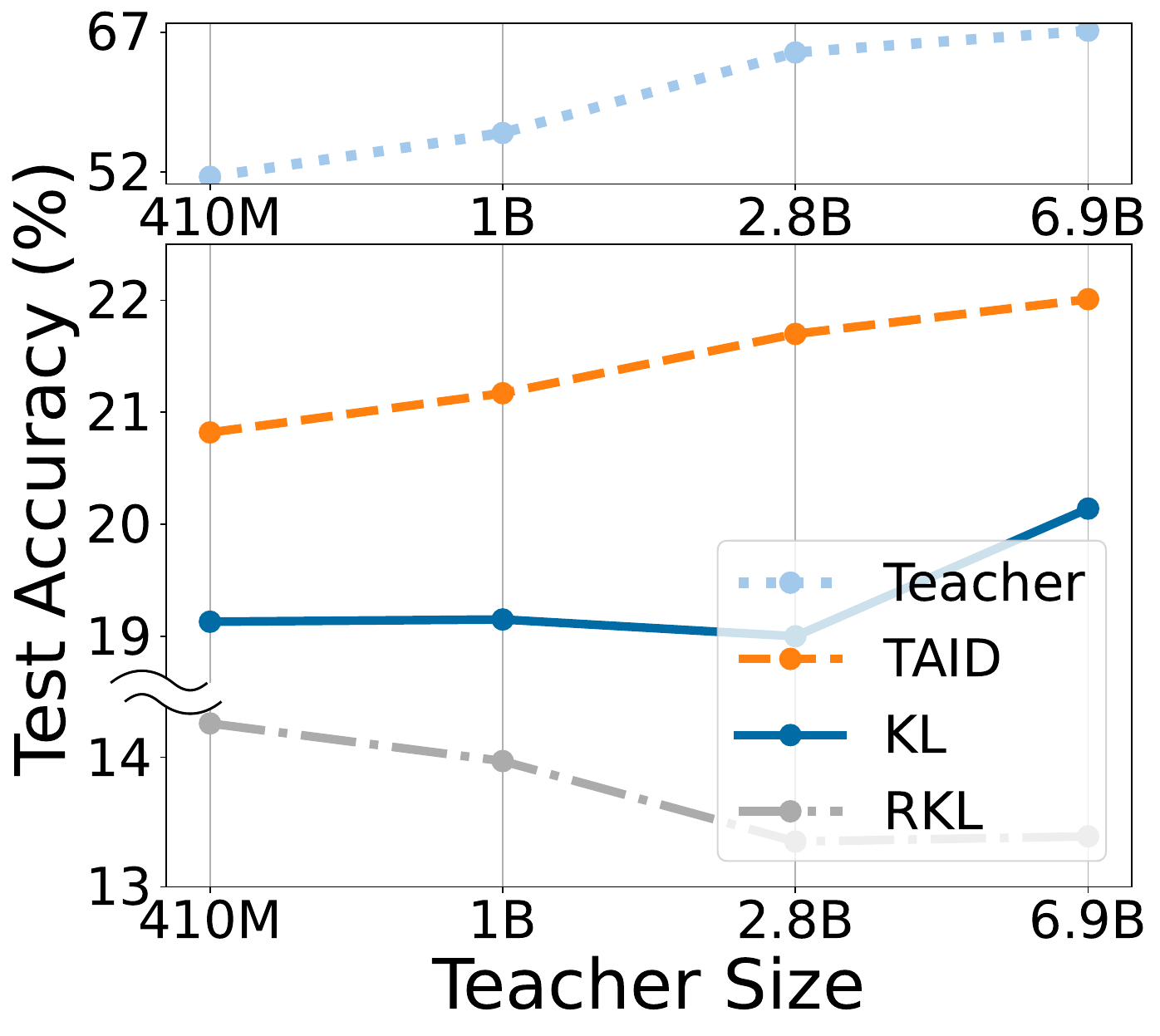}
\end{minipage}
\caption{\textbf{Analysis of TAID's behavior and performance.} 
\textbf{(Left)} Interpolation parameter $t$ behavior: Higher $\alpha$ values lead to faster initial growth compared to linear increase, allowing for more aggressive knowledge transfer in early stages when the capacity gap is small.
\textbf{(Middle)} Objective value comparison: TAID exhibits a more stable objective value with lower variance compared to standard KL divergence throughout training, indicating a consistent learning difficulty that aligns with the student's evolving capabilities.
\textbf{(Right)} Performance across different teacher sizes: TAID shows monotonic improvement and outperforms other methods as teacher size increases, demonstrating its effectiveness in addressing the curse of capacity gap.}
\label{fig:analysis}
\end{figure}

\subsubsection{Analysis of interpolation parameter and training stability}
\label{sec:taid-behavior}

We analyzed TAID's interpolation parameter $t$ and learning dynamics to validate its design. 
Figure~\ref{fig:analysis} (Left) shows how different learning rates $\alpha$ affect $t$'s behavior over time under the setting of Section~\ref{sec:instruction_tuning}, with $t_\text{start}$ set to 0.4.
We can confirm that $t$ is smoothly increasing thanks to our adaptive update mechanism.
Higher $\alpha$ values lead to faster initial growth of $t$, enabling more aggressive early knowledge transfer, which is particularly beneficial when the capacity gap between student and teacher models is small.

Figure~\ref{fig:analysis} (Middle) compares the objective value of TAID (using the intermediate distribution) with the standard KL divergence between the teacher and student during training.
TAID demonstrates a constant value with low variance throughout the training process, in contrast to the higher and more variable loss of standard KL. This stability in loss indicates that TAID's adaptive interpolation mechanism keeps the learning task at a consistent level of difficulty, aligning with the student's current capabilities. This controlled learning environment potentially leads to more efficient and stable knowledge transfer throughout the training process.

\subsubsection{Performance across various capacity gaps}
\label{sec:analysis-capacity-gaps}

TAID's design, which gradually transfers knowledge from the teacher model, 
is expected to address the curse of capacity gap described in Section~\ref{sec:preliminaries}.
To evaluate this, we conducted an experiment using a fixed-size student model (70m) trained with teachers of varying capacities (410M to 6.9B) from the Pythia Suite~\citep{biderman2023pythiasuiteanalyzinglarge}. Models were trained on a random 1B token subset of the SmolLM-Corpus for 1 epoch, due to computational cost constraints.
We chose the LAMBADA dataset~\citep{paperno2016lambadadatasetwordprediction} for evaluation, as it tests a model's ability to predict the final word of a passage, directly assessing language modeling capability without relying on specific knowledge, making it suitable for comparing models with small-scale training.

Figure~\ref{fig:analysis} (Right) shows that TAID consistently outperforms both KL and RKL divergence methods across all teacher model sizes. Notably, TAID exhibits a consistent upward trend in performance as the teacher model size increases while KL and RKL methods show inconsistent performance trends. This inconsistency in KL and RKL methods aligns with the curse of capacity gap, where larger teacher models do not always lead to better student performance, described Section~\ref{sec:preliminaries}. TAID's consistent improvement with larger teachers indicates its robustness in handling varying capacity gaps, making it particularly suitable for distilling knowledge from state-of-the-art large language models into more compact and deployable student models. 

\subsubsection{Balancing mode averaging and mode collapse}
\label{sec:analysis-balancing-mode}
To demonstrate TAID's effectiveness in balancing mode-averaging and mode-collapse issues, we analyzed the distributions of student models trained using KL divergence, RKL divergence, and TAID. We used the trained models of the \texttt{Phi-3-mini-4k-instruct} (teacher) and \texttt{TinyLlama} (student) pair in Section~\ref{sec:instruction_tuning}, with distributions calculated from the UltraChat 200k train set.

Table~\ref{tab:mode-analysis} presents a summary of our analysis, showing the probability mass distribution for the head and tail of the vocabulary as ranked by the teacher model. We observe that TAID consistently maintains probability masses between those of KL and RKL for both the head and tail of the distribution. 
\begin{wraptable}{r}{.35\textwidth}
    \centering
    \caption[table]{\footnotesize{\textbf{Probability mass distribution analysis.} Head: sum of probabilities for top-10 tokens. Tail: sum of probabilities for tokens in the 80--100th percentile.\footnotemark
    }}
    \label{tab:mode-analysis}
    \resizebox{1\linewidth}{!}{
    \footnotesize
    \begin{tabular}{lrr}
        \toprule
        \textbf{Method} & \textbf{Head} & \textbf{Tail} \\
        \midrule
        KL & 0.216 & 40.2 \scriptsize{$\times 10^{-7}$} \\
        RKL & 0.227 & 8.1 \scriptsize{$\times 10^{-7}$} \\
        \rowcolor{lightgray}
        TAID & 0.218 & 39.0 \scriptsize{$\times 10^{-7}$} \\
        \bottomrule
    \end{tabular}
    }
\end{wraptable}
\footnotetext{Typically, probabilities range from $10^{-1}$ to $10^{-2}$ for Head tokens and from $10^{-10}$ to $10^{-11}$ for Tail tokens.}
In the head, TAID captures dominant vocabulary in the teacher's distribution more than KL, effectively avoiding the mode-averaging issue. While RKL captures the dominant vocabulary more than TAID, it significantly fails to capture low-frequent vocabulary in the tail of the teacher distribution, which TAID captures reasonably, preventing the mode-collapse issue.
These results indicate that TAID successfully navigates the trade-off between mode averaging and mode collapse, achieving a more balanced and faithful representation of the teacher's distribution across both common and rare tokens. This balanced approach contributes to TAID's superior performance in knowledge distillation tasks, as it more effectively captures the full spectrum of the teacher's knowledge while maintaining a focused distribution.

\subsubsection{Comparison with image classification tasks}
\label{sec:comparison-img}
\begin{wrapfigure}{r}{0.35\textwidth}
    \centering
    \vspace{-\baselineskip}
    \includegraphics[width=1.\linewidth]{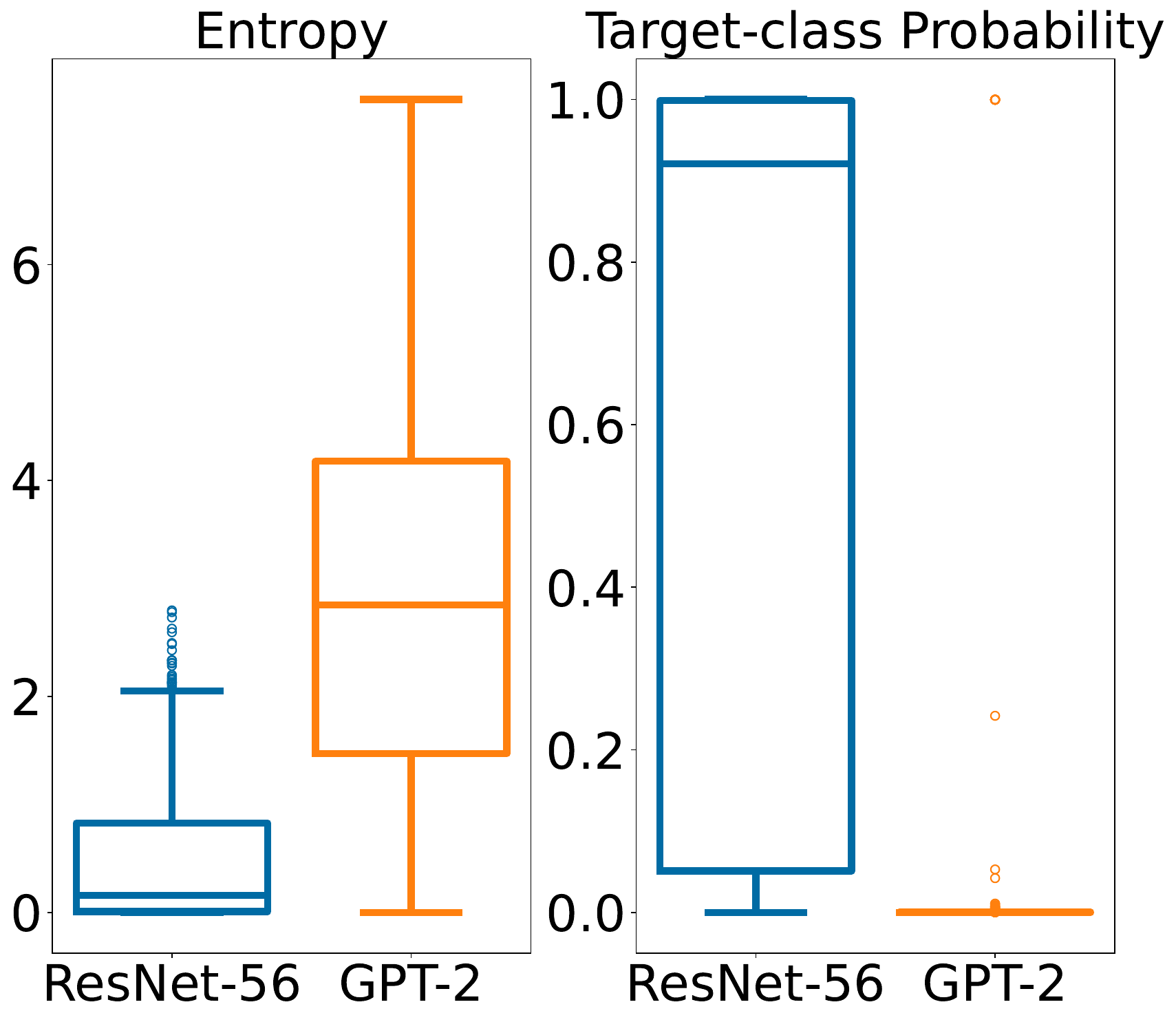}
    \caption{\footnotesize{\textbf{Comparison between image classification and language modeling tasks.} Language modeling (GPT-2) exhibits significantly higher entropy and lower target-class probabilities compared to image classification (ResNet-56).
    These fundamental differences highlight the unique challenges in language model distillation.
    }
    }
    \label{fig:dist-comparison}
\end{wrapfigure}
Our experiments revealed that KD methods developed for image classification, such as CTKD~\citep{li2022curriculumtemperatureknowledgedistillation} and DKD~\citep{zhao2022decoupledknowledgedistillation}, underperform in language model distillation. We hypothesize that this is due to fundamental differences in the distributions between language modeling tasks and image classification tasks.
Figure~\ref{fig:dist-comparison} illustrates the entropy of the distribution and the probabilities of ground-truth classes (target-class probabilities) for two representative models: ResNet-56~\citep{resnet} for image classification and GPT-2~\citep{radford2019language} for language modeling.\footnote{For this analysis, we used the CIFAR-100~\citep{cifar100} dataset for ResNet-56 and the OpenWebText~\citep{Gokaslan2019OpenWeb} dataset for GPT-2.} Image classification typically involves predicting a one-hot distribution with high target-class probability and low entropy. In contrast, language modeling predicts a more diverse probability distribution, resulting in lower target-class probabilities and higher entropy.
These characteristics lead to two key challenges in language model distillation. First, there is an increased susceptibility to mode collapse, as the model can easily be pulled toward non-target modes. 
Second, language modeling poses a significant challenge for smaller models with limited capacity: predicting extremely low-frequency classes. This difficulty is compounded by a power law distribution of word frequencies (Zipf's law), resulting in a large number of extremely low-frequency classes in the long tail of the distribution.
To test this hypothesis and to assess TAID's flexibility, we evaluated TAID on multiple image classification tasks (results in Appendix~\ref{app:img-classification}). While gains were modest on CIFAR-100, TAID consistently outperformed CTKD and DKD on the more complex ImageNet task. This aligns with our observation that ImageNet (entropy: 6.67, target-class probability: 0.00130) presents a more challenging distribution compared to CIFAR-100 (entropy: 0.485, target-class probability: 0.613).
These findings highlight the need for distillation methods tailored to language modeling's unique challenges. TAID's strong performance across domains, particularly in complex tasks, demonstrates its potential as a versatile approach to knowledge distillation. Future work could explore its application to other tasks involving long-tail distributions or complex probability predictions beyond language modeling.

\section{Application to state-of-the-art model development}
\label{sec:sota}

\begin{table}[t]
\begin{minipage}[t]{.48\textwidth}
    \centering
\caption{\textbf{Performance of \texttt{TAID-LLM-1.5B}}, our new state-of-the-art LLM for models under 2B parameters. 
See Table~\ref{tab:lighteval-detailed} for task breakdown.
}
    \label{tab:lighteval}
    {
    \footnotesize
    \begin{tabular}{lrr}
        \toprule
        \textbf{Model} & \multicolumn{2}{r}{\textbf{LightEval} ($\uparrow$)} \\
        \midrule
        \multicolumn{2}{l}{\texttt{Qwen2-1.5B}\scriptsize{~\citep{yang2024qwen2technicalreport}}} & 46.19 \\
        \multicolumn{2}{l}{\texttt{Phi-1.5B}\scriptsize{~\citep{phi-1.5}}}
         & 50.39 \\
        \multicolumn{2}{l}{\texttt{StableLM-2-1.6B}\scriptsize{~\citep{bellagente2024stablelm2}}} & 51.24 \\
        \multicolumn{2}{l}{\texttt{SmolLM-1.7B}\scriptsize{~\citep{allal2024SmolLM}}} & 51.31 \\
        \rowcolor{lightgray}
        \multicolumn{2}{l}{\textbf{\texttt{TAID-LLM-1.5B}}} & \textbf{52.27} \\
        \bottomrule
    \end{tabular}
    }
\end{minipage}
\hfill
\begin{minipage}[t]{.48\textwidth}
    \centering
    \caption{\textbf{Performance of \texttt{TAID-VLM-2B}}, our new state-of-the-art VLM for models up to 4B parameters.
 See Table~\ref{tab:open_vlm-detailed} for task breakdown.    
}
    \label{tab:open_vlm}
    {
    \footnotesize
    \begin{tabular}{lrr}
        \toprule
        \textbf{Model} & \multicolumn{2}{r}{\textbf{Open-VLM-LB} ($\uparrow$)} \\
        \midrule
        \multicolumn{2}{l}{\texttt{PaliGemma}\scriptsize{~\citep{beyer2024paligemma}}} & 46.56 \\
        \multicolumn{2}{l}{\texttt{MiniCPM-V-2}\scriptsize{~\citep{yao2024minicpm}}} & 47.93 \\
        \multicolumn{2}{l}{\texttt{Phi-3-Vision}\scriptsize{~\citep{abdin2024phi3technicalreporthighly}}} & 53.60 \\
        \multicolumn{2}{l}{\texttt{InternVL2-2B}\scriptsize{~\citep{chen2023internvl}}} & 53.96 \\
        \rowcolor{lightgray}
        \multicolumn{2}{l}{\textbf{\texttt{TAID-VLM-2B}}} & \textbf{56.43} \\
        \bottomrule
    \end{tabular}
    }
\end{minipage}
\end{table}

Building upon our systematic evaluation of TAID, we further demonstrate its effectiveness in developing state-of-the-art models. We introduce two models: \texttt{TAID-LLM-1.5B} and \texttt{TAID-VLM-2B}, which have achieved state-of-the-art performance in their respective size categories for large language models (LLMs) and vision-language models (VLMs).

\paragraph{TAID-LLM-1.5B.}
We developed \texttt{TAID-LLM-1.5B}, a new 1.5B-parameter language model, using our TAID method. Following recent conventions in evaluating language models of this size~\citep{allal2024SmolLM}, we evaluated it using LightEval~\footnote{\url{https://huggingface.co/blog/smollm}}, a comprehensive benchmark suite for small language models. Table~\ref{tab:lighteval} shows that \texttt{TAID-LLM-1.5B} achieves the highest score, setting a new state-of-the-art for models with fewer than 2 billion parameters. Detailed settings and results can be found in Appendix~\ref{app:taid-llm}.

\paragraph{TAID-VLM-2B.}
To showcase TAID's versatility, we developed \texttt{TAID-VLM-2B}, a new 2B-parameter vision-language model. We evaluated it following the Open VLM Leaderboard protocol~\citep{2023opencompass}\footnote{\url{https://huggingface.co/spaces/opencompass/open_vlm_leaderboard}}. As shown in Table~\ref{tab:open_vlm}, \texttt{TAID-VLM-2B} achieves the highest score among state-of-the-art vision-language models up to 4B parameters, even surpassing the performance of larger models like \texttt{Phi-3-Vision} (4.2B parameters). This success highlights TAID's capability in transferring multimodal knowledge across significant capacity gaps. Detailed settings and results can be found in Appendix \ref{app:taid-vlm}.
\section{Conclusion}
\label{sec:conclusion}
We introduced Temporally Adaptive Interpolated Distillation (TAID), a novel knowledge distillation approach that effectively addresses the challenges of compressing large language models. Our experiments demonstrated TAID's superior performance across various model sizes and architectures, consistently outperforming state-of-the-art methods. The development of \texttt{TAID-LLM-1.5B} and \texttt{TAID-VLM-2B}, achieving state-of-the-art performance in their categories, underscores TAID's practical impact.
TAID's dynamic bridge mechanism effectively mitigates mode-averaging and mode-collapse problems, leading to more stable and efficient training. These advantages contribute to more accessible deployment of advanced language technologies in resource-constrained environments. Future research could extend TAID to other distance metrics, explore non-linear interpolations, adapt it for multi-teacher distillation~\citep{wan2024knowledgefusionlargelanguage}, and investigate its application in other modalities and tasks beyond classification.
In conclusion, TAID represents a significant advancement in knowledge distillation, offering both theoretical insights and practical benefits. As AI evolves, techniques like TAID will be crucial in making these advancements more accessible and deployable in real-world applications.

\ificlrfinal
\section*{Author contributions}
Makoto Shing and Takuya Akiba initiated this project. Makoto Shing is the main contributor who conceptualized and proposed the TAID method, designed and conducted all experiments, performed theoretical analysis, implemented the main code, wrote the initial draft of the manuscript, and was responsible for data analysis and interpretation of results. Consistently led and executed all aspects of the project from inception to completion. Kou Misaki contributed to data processing for the TAID-LLM-1.5B model. Han Bao provided crucial feedback on theoretical interpretations and analysis. Sho Yokoi offered valuable insights and feedback, especially based on his expertise in Natural Language Processing. Takuya Akiba served as the primary advisor throughout the project, offering guidance, technical insight, advice, and supervision from inception to completion. All authors reviewed and edited the final manuscript.

\section*{Acknowledgements}
The authors would like to thank Masanori Suganuma and Tianyu Zhao for providing valuable discussions and feedback while drafting the text.
This work is based on results obtained from a project, JPNP20017, subsidized by the New Energy and Industrial Technology Development Organization (NEDO).
This work was supported by JSPS KAKENHI (Grant No. 22H05106), JST FOREST (Grant No. JPMJFR2331), and JST PRESTO (Grant No. JPMJPR24K6).

\else
\paragraph{Ethics statement.}
Our research focuses on knowledge distillation techniques for language models, which inherently involve working with large-scale pre-trained models and datasets. We acknowledge the potential ethical concerns related to the use of such models, including biases present in the training data and the environmental impact of training large models. In our work, we have made efforts to use publicly available datasets and models to ensure transparency. We also emphasize that our method, TAID, aims to create more efficient, compact models, which could potentially reduce the computational resources required for deployment, thus mitigating some environmental concerns. However, we caution that the distilled models may inherit biases present in the teacher models, and recommend careful consideration and further analysis before deployment in sensitive applications.

\paragraph{Reproducibility statement.}
Our complete source code is included in the supplementary materials, containing scripts for data preprocessing, model training, and evaluation. For our state-of-the-art models, \texttt{TAID-LLM-1.5B} and \texttt{TAID-VLM-2B}, we have provided links to the model weights as well as detailed instructions on how to run the models in the README file of our source code. All experiments were conducted using only publicly available datasets and open-source models. To ensure reproducibility of our results, we have provided detailed descriptions of our experimental setup, including model architectures, datasets, and hyperparameters, in Section~\ref{sec:experiments} and Appendix~\ref{app:experiment-details}. The TAID algorithm is fully described in Section~\ref{sec:method}, with additional implementation details in Appendix~\ref{app:taid_algorithm}. 

\fi

\bibliography{iclr2025_conference}
\bibliographystyle{iclr2025_conference}

\appendix

\section{TAID training algorithm}
\label{app:taid_algorithm}

Algorithm \ref{alg:TAID} provides a detailed description of the TAID training procedure, including the adaptive update mechanism for the interpolation parameter $t$.
\begin{algorithm}
\caption{TAID training algorithm}
\label{alg:TAID}
\begin{algorithmic}[1]
    \State \textbf{Input:} Learning rate $\eta$, learning rate of the interpolation parameter $\alpha$, momentum coefficient $\beta$, total iterations $N$, start value $t_\text{start}$, end value $t_\text{end}$
    \State Initialize student model parameters $\theta$
    \State Initialize $t_1 = t_\text{start}$, $m_0 = 0$, $J_{\mathrm{TAID}}^{(t_0)} = \infty$
    \For{each training iteration $n = 1$ to $N$}
        \State Compute linear increase value: $t_\text{linear} = t_\text{start} + (t_\text{end} - t_\text{start}) \cdot n / N$
        \State Sample batch $\{(y^{<s}_j, y_j)\}_{j=1}^B$ from dataset $\mathcal{D}$
        \State Compute $p_{t_n}(y_s|y^{<s})$ using Eq. (\ref{eq:TAID-dist})
        \State Compute $J_{\mathrm{TAID}}^{(t_n)}$ using Eq. (\ref{eq:TAID})
        \State Update $\theta$: $\theta \leftarrow \theta - \eta\nabla_\theta J_{\mathrm{TAID}}^{(t_n)}$
        \State $\delta_n = (J_{\mathrm{TAID}}^{(t_{n-1})} - J_{\mathrm{TAID}}^{(t_n)})/(J_{\mathrm{TAID}}^{(t_{n-1})} + \epsilon)$
        \State $m_n = \beta m_{n-1} + (1-\beta)\delta_n$
        \State $\Delta t = \alpha \cdot \text{sigmoid}(m_n) \cdot (1-t_n)$
        \State $t_{n+1} \leftarrow \min(t_\text{end}, \max(t_\text{linear}, t_n + \Delta t))$
    \EndFor
\end{algorithmic}
\end{algorithm}
The TAID algorithm utilizes several key hyperparameters that control the behavior of the interpolation parameter $t$ and the adaptive update mechanism. We discuss the effects of these parameters below:

\begin{itemize}
    \item $\alpha$ (learning rate of $t$): This parameter controls the speed of the adaptive update for $t$. Figure~\ref{fig:analysis} (Left) shows the behavior of $t$ for different values of $\alpha$, including a linear increase for comparison. As $\alpha$ increases, we observe that $t$ grows more rapidly in the early stages when the student model is close to the initial interpolation distribution. This allows for more efficient learning when the task is relatively easy for the student.
    
    \item $\beta$ (momentum coefficient): This parameter controls the smoothness of the adaptive update. A higher value of $\beta$ results in more stable updates by reducing the impact of short-term fluctuations in the objective function. In our experiments, we found that a $\beta$ value around 0.99 worked well across different scenarios.

    \item $t_\text{start}$ (initial value of $t$): This parameter determines the starting point of the interpolation. It is particularly useful for skipping the initial stages of learning when the task is very easy for the student. The choice of $t_\text{start}$ should be based on the intuitive gap between the initial student and teacher models. In our experiments, we found that values between 0.2 and 0.4 often yield good results, depending on the initial similarity between the student and teacher models.

    \item $t_\text{end}$ (maximum value of $t$): This parameter sets the upper limit for $t$, typically set to 1.0 to ensure that the final distribution matches the teacher model.
\end{itemize}
The algorithm uses a linear increase schedule ($t_\text{linear}$) as a lower bound for $t$, ensuring that $t$ increases at least linearly over the course of training. This approach maintains the adaptive nature of TAID while guaranteeing a minimum rate of progression towards the teacher distribution.

In our experiments, TAID demonstrated robust performance across various tasks with minimal hyperparameter tuning. We usually used $\beta=0.99$ and $\alpha=5\mathrm{e}{-4}$, with $t_\text{start}$ typically ranging between 0.2 and 0.4, depending on the initial student-teacher similarity. While these default values often yield good results, practitioners may achieve further improvements by fine-tuning these parameters for their specific tasks and model architectures, particularly in cases that differ significantly from our experimental settings.

\section{Theoretical analysis of mode collapse}
\label{app:proof-collapse}
In this section, we formally study the mode-collapse behavior of TAID.

\subsection{Analysis model}
To study the collapse phenomenon, we leverage the analysis framework used by \citet{mobahi2020self}.
We study the regression problem in the interpolation regime:%
\footnote{The \emph{interpolation} regime must be distinguished from the time \emph{interpolation} used in the proposed TAID.}
\begin{equation}
  \label{equation:interpolation_problem}
  f^*\coloneqq\argmin_{f\in\mathcal{F}}R(f) \quad\text{s.t.}\quad \frac1N\sum_{i=1}^N(f(\mathbf{x}_i)-y_i)^2\le\epsilon,
\end{equation}
where $\mathcal{D}\coloneqq\{(\mathbf{x}_i,y_i)\}_{i=1}^N$ is a finite training set with $d$-dimensional covariates $\mathbf{x}_i\in\mathcal{X}\subseteq\mathbb{R}^d$ and one-dimensional outcome $y_i\in\mathbb{R}$, $\epsilon>0$ is a desired loss tolerance parameter, $R(f)$ is a regularization functional, and $\mathcal{F}\subseteq\mathbb{R}^\mathcal{X}$ is a hypothesis space.
Since we are interested in a large model regime, $\mathcal{F}$ is reasonably assumed to be encompassing all measurable functions.
The mean-squared loss is used in \eqref{equation:interpolation_problem} instead of the KL divergence, which is convenient to obtain analytical solutions later.
The regularizer in the following form is considered:
\begin{equation}
  \label{equation:regularizer}
  R(f)=\int u(\mathbf{x},\mathbf{x}')f(\mathbf{x})f(\mathbf{x}')\mathrm{d}\mathbf{x}\mathrm{d}\mathbf{x}',
\end{equation}
where $u$ is a symmetric kernel inducing $R(f)\ge0$ with equality only when $f=0$.
The interpolation problem \eqref{equation:interpolation_problem} may \emph{collapse} depending on the teacher signals.
Let us stack labels into a vector:
\[
  \mathbf{y}\coloneqq[y_1\;y_2\;\dots y_N]^\top\in\mathbb{R}^N.
\]
When $\|\mathbf{y}\|^2\le N\epsilon$ holds, the problem \eqref{equation:interpolation_problem} has a trivial solution $f=0$.
Such a collapse may happen particularly in the self-distillation paradigm because the teacher signals are (partially) given by our hypothesis itself.
Thus, it is crucial to investigate when and whether the non-collapse condition $\|\mathbf{y}\|^2>N\epsilon$ is satisfied to ensure that our hypothesis learns meaningful signals.

\paragraph*{Variational problem.}
The Lagrangian variational problem of \eqref{equation:interpolation_problem} is given as follows:
\begin{equation}
  \label{equation:lagrange}
  \begin{aligned}
    f_\lambda^*&\coloneqq\argmin_{f\in\mathcal{F}}\frac1N\sum_{i=1}^N(f(\mathbf{x}_i)-y_i)^2+\lambda\int u(\mathbf{x},\mathbf{x}')f(\mathbf{x})f(\mathbf{x}')\mathrm{d}\mathbf{x}\mathrm{d}\mathbf{x}',\\
    \text{where}&\quad\frac1N\sum_{i=1}^N(f_\lambda^*(\mathbf{x}_i)-y_i)^2-\epsilon=0,
  \end{aligned}
\end{equation}
and $\lambda^{-1}>0$ is the Lagrange multiplier.
The solution to the variational problem \eqref{equation:lagrange} can be analytically written down.
Let $g$ be the \texttt{Green function} of the linear operator $[Lf](\mathbf{x})\coloneqq\int u(\mathbf{x},\mathbf{x}')f(\mathbf{x}')\mathrm{d}\mathbf{x}'$ such that
\begin{equation}
  \int u(\mathbf{x},\mathbf{x}')g(\mathbf{x}',\mathbf{x}_0)\mathrm{d}\mathbf{x}'=\delta(\mathbf{x}-\mathbf{x}_0),
\end{equation}
where $\delta(\mathbf{x})$ is the Dirac delta.
Let $\mathbf{G}\in\mathbb{R}^{N\times N}$ and $\mathbf{g}_\mathbf{x}\in\mathbb{R}^N$ be
\[
  \mathbf{G}_{i,j}\coloneqq\frac1Ng(\mathbf{x}_i,\mathbf{x}_j)
  \quad\text{and}\quad
  \mathbf{g}_{\mathbf{x},i}\coloneqq\frac1Ng(\mathbf{x},\mathbf{x}_i)
  \quad\text{for all $i,j\in[N]$.}
\]
Then, the analytical solution to \eqref{equation:lagrange} is given as follows \citep[Proposition 1]{mobahi2020self}:
\begin{equation}
  \label{equation:lagrange_solution}
  f_\lambda^*(\mathbf{x})=\mathbf{g}_\mathbf{x}^\top(\lambda\mathbf{I}+\mathbf{G})^{-1}\mathbf{y}.
\end{equation}
If we diagonalize $\mathbf{G}$ (which is positive definite) as $\mathbf{G}=\mathbf{V}^\top\mathbf{D}\mathbf{V}$, the prediction vector over the training inputs $\mathbf{x}_1,\dots,\mathbf{x}_N$ is given as
\begin{equation}
  \label{equation:prediction_vector}
  \mathbf{f}\coloneqq[f_\lambda^*(\mathbf{x}_1)\;\dots\;f_\lambda^*(\mathbf{x}_N)]^\top
  =\mathbf{V}^\top\mathbf{D}(\lambda\mathbf{I}+\mathbf{D})^{-1}\mathbf{V}\mathbf{y}.
\end{equation}
The solution \eqref{equation:prediction_vector} is essentially a nonlinear extension of the ridge estimator.
Note that $\mathbf{V}\in\mathbb{R}^{N\times N}$ is an orthogonal matrix and $\mathbf{D}=\mathrm{diag}(d_1,\dots,d_N)$ has positive eigenvalues solely.

Importantly, \eqref{equation:lagrange_solution} is the solution to the variational problem \eqref{equation:lagrange}, which is parametrized by $\lambda$ satisfying $\frac1N\sum_{i}(f_\lambda^*(\mathbf{x}_i)-y_i)^2-\epsilon=0$.
Solving this in $\lambda$ is hard because of its non-linearity, but \citet[Eq.~(24)]{mobahi2020self} evaluate its upper and lower bound:
\begin{equation}
  \label{equation:lambda_bound}
  \lambda=\frac{\alpha\sqrt{N\epsilon}}{\|\mathbf{y}\|-\sqrt{N\epsilon}}
  \quad\text{for some $\alpha\in[d_{\min},d_{\max}]$,}
\end{equation}
where $d_{\max}\coloneqq\max_id_i$ and $d_{\min}\coloneqq\min_id_i$.
Thus, the analytical solution \eqref{equation:lagrange_solution} with this range of $\lambda$ is a solution to the original interpolation problem \eqref{equation:interpolation_problem}, too.

\paragraph*{Remark on connection to language modeling.}
  The interpolation formulation \eqref{equation:interpolation_problem} is based on the standard (one-dimensional) regression problem, which obviously deviates from the language modeling problem introduced in \eqref{eq:TAID}.
  Nonetheless, we believe that this formulation is not only beneficial for our transparent understanding owing to its simplicity but also has a connection to multi-categorical distributions.
  In distributional modeling, a student model $q_\theta$ outputs a probability distribution over $\mathcal{Y}$, and falls into \texttt{mode collapse} when $q_\theta$ has only few numbers of non-zero probabilities, that is, $\{c\in\mathcal{Y}\mid q_{\theta}(y=c)>0\}\ll|\mathcal{Y}|$.
  To deal with the multi-categorical outputs, we can extend the one-dimensional problem \eqref{equation:interpolation_problem} as follows:
  \[
    \forall c\in\mathcal{Y},\quad f_c^*\coloneqq\argmin_{f_c\in\mathcal{F}}R(f_c) \quad\text{s.t.}\quad \frac1N\sum_{i=1}^N(f_c(\mathbf{x}_i)-y_{i,c})^2\le\epsilon,
  \]
  where teacher signal $y_{i,c}$ is given in the one-hot format such that $\sum_{c\in\mathcal{Y}}y_{i,c}=1$ and $y_{i,c}\in\{0,1\}$ for all $c\in\mathcal{Y}$.
  We can follow the subsequent analysis straightforwardly.
  In this multi-categorical problem, a model $(f_c)_{c\in\mathcal{Y}}$ is regarded as falling into mode collapse if $f_c=0$ for many $c\in\mathcal{Y}$.
  This is measured by the teacher signal condition $\|\mathbf{y}_c\|^2\le N\epsilon$ for each $c$, where $\mathbf{y}_c\in\{0,1\}^N$ is the stacked labels for class $c$.
  Thus, studying \eqref{equation:interpolation_problem} is directly relevant to mode collapse in language modeling.

\subsection{Formal theoretical statement}
To study TAID in a fashion of the interpolation problem \eqref{equation:interpolation_problem}, we consider the following learning procedure listed in Algorithm~\ref{algorithm:taid_interpolation}.
Here, the input signals $\mathbf{y}_0$ are deemed as the well-trained teacher---we can deem $\mathbf{y}_1$ as the well-trained teacher, but the resulting distillation dynamics would not change much.
\begin{algorithm}
  \caption{TAID learning procedure for least-square regression}
  \label{algorithm:taid_interpolation}
  \begin{algorithmic}[1]
    \Require $T$ number of iterations, $\mathbf{y}_0\in\mathbb{R}^N$ input signals
    \State $t\gets 0$
    \While{$t<T$}
      \State $\tilde{\mathbf{y}}_t\gets(1-\frac{t}{T})\mathbf{y}_t+\frac{t}{T}\mathbf{y}_0$ \Comment{Compose intermediate teacher}
      \State $\lambda_t\gets\alpha_t\sqrt{N\epsilon}/(\|\tilde{\mathbf{y}}_t\|-\sqrt{N\epsilon})$ \Comment{Choose an appropriate $\lambda_t$ by \eqref{equation:lambda_bound}}
      \State $\mathbf{y}_{t+1}\gets\mathbf{V}^\top\mathbf{D}(\lambda_t\mathbf{I}+\mathbf{D})^{-1}\mathbf{V}\tilde{\mathbf{y}}_{t}$ \label{algorithm:taid_update} \Comment{Solve the variational problem with teacher $\tilde{\mathbf{y}}_t$ and $\lambda_t$}
      \State $t\gets t+1$
    \EndWhile
  \end{algorithmic}
\end{algorithm}

\begin{theorem}
  \label{theorem:non_collapse}
  Let $\kappa\coloneqq d_{\max}/d_{\min}(\ge1)$ be the condition number of $\mathbf{G}$.
  The prediction vector $\mathbf{y}_{t+1}$ does not collapse, namely $\mathbf{y}_{t+1}=\mathbf{0}$ cannot be a solution to the interpolation problem \eqref{equation:interpolation_problem}, if for some $\gamma\in[0,1]$, either of the following holds:
  \begin{equation}
    \label{equation:taid_non_collapse}
    t<\min\left\{\frac{1}{\gamma+\kappa}(r_0-\gamma)+o(1), \frac{\gamma}{r_0}T\right\}
    \quad \text{or} \quad
    \frac{1}{r_0}T<t,
  \end{equation}
  where $r_0\coloneqq\|\mathbf{y}_0\|/\sqrt{N\epsilon}>1$ and $o(1)$ is an asymptotic term in the large $r_0$ limit.
\end{theorem}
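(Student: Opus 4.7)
The crux of the plan is the observation that the trivial solution $\mathbf{y}_{t+1} = \mathbf{0}$ is a valid minimizer of the variational problem \eqref{equation:interpolation_problem} at the $(t+1)$-th iteration iff it is feasible, i.e., iff $\|\tilde{\mathbf{y}}_t\|^2 \leq N\epsilon$. Writing $\rho_t \defeq \|\mathbf{y}_t\|/\sqrt{N\epsilon}$ and $r_t \defeq \|\tilde{\mathbf{y}}_t\|/\sqrt{N\epsilon}$, non-collapse of $\mathbf{y}_{t+1}$ is thus equivalent to $r_t > 1$. A key preliminary I would establish is a sign-preservation lemma: in the eigenbasis of $\mathbf{G}$, each coordinate of $\mathbf{y}_s$ retains the sign of the corresponding coordinate of $\mathbf{y}_0$ for every $s \leq t$. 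This follows by induction using that (i) the update matrix $\mathbf{D}(\lambda_{s-1}\mathbf{I}+\mathbf{D})^{-1}$ is diagonal in the eigenbasis with strictly positive entries and (ii) the interpolation defining $\tilde{\mathbf{y}}_{s-1}$ has non-negative coefficients. As a consequence, the cross term in the expansion of $\|\tilde{\mathbf{y}}_t\|^2$ is non-negative in the eigenbasis, giving the pair of pointwise lower bounds
\[
  r_t \geq (1 - t/T)\rho_t \quad \text{and} \quad r_t \geq (t/T) r_0.
\]

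The late-stage condition $t > T/r_0$ then follows at once from the second bound: $r_t \geq (t/T) r_0 > 1$. For the early-stage condition, I would next derive a scalar contraction bound on $\rho_{t+1}$ by using $\lambda_{\min}(\mathbf{D}(\lambda_t\mathbf{I}+\mathbf{D})^{-1}) = d_{\min}/(\lambda_t + d_{\min})$ with the worst-case $\alpha_t = d_{\max}$ in \eqref{equation:lambda_bound}, which after simplification gives
\[
  \rho_{t+1} \geq \frac{r_t (r_t - 1)}{\kappa + r_t - 1} \geq r_t - \kappa.
\]
Combining this with the sign-preservation bound $r_t \geq (1-t/T)\rho_t$ yields the recursion $\rho_{t+1} \geq (1 - t/T)\rho_t - \kappa$. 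Under the hypothesis $t < \gamma T/r_0$ (so that $s/T \leq \gamma/r_0$ for every $s \leq t$), telescoping this geometric-arithmetic recursion and Taylor-expanding in the large-$r_0$ limit yields $\rho_t \geq r_0 - t(\gamma+\kappa) + o(1)$. Substituting back into the sign-preservation lower bound on $r_t$, the condition $r_t > 1$ reduces to $t < (r_0 - \gamma)/(\gamma+\kappa) + o(1)$, recovering the first condition of \eqref{equation:taid_non_collapse}.

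The main obstacle I anticipate is the bookkeeping in the early-stage telescoping. The implicit $\lambda_t$ depends on $r_t$, which itself depends jointly on $\rho_t$ and $t/T$; to extract the sharp denominator coefficient $\gamma + \kappa$ rather than a looser $\kappa$ alone, one must carry the $(1 - s/T)$ factors chained over all prior iterations and control the residual $o(1)$ via the asymptotic expansion in $1/r_0$. Sign preservation itself must be maintained inductively throughout the argument, which is conditional on no earlier collapse---but this consistency is delivered by the same bound one is proving, so the induction closes.
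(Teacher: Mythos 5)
Your proposal is correct in substance and reaches the same two-regime conclusion, but its key technical device differs from the paper's. Both arguments start identically (pass to the eigenbasis of $\mathbf{G}$, reduce non-collapse to $r_t>1$, and use the diagonal update $\mathbf{A}_\tau=\mathbf{D}(\lambda_\tau\mathbf{I}+\mathbf{D})^{-1}$ with $\sigma_{\min}(\mathbf{A}_\tau)\ge(r_\tau-1)/(\kappa+r_\tau-1)$ via the worst case $\alpha_\tau=d_{\max}$). Where you diverge is in how you handle the interpolation $\tilde{\mathbf{y}}_t=(1-t/T)\mathbf{y}_t+(t/T)\mathbf{y}_0$: your sign-preservation lemma makes the cross term non-negative coordinatewise, giving the clean pair of bounds $r_t\ge(1-t/T)\rho_t$ and $r_t\ge(t/T)r_0$. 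The paper instead uses the reverse triangle inequality, $r_t\ge(1-t/T)\sigma_{\min}(\mathbf{A}_{t-1})r_{t-1}-(t/T)r_0$, and for the late phase fully unrolls the recursion into $\tilde{\mathbf{z}}_t=\overline{\mathbf{A}}_t\mathbf{z}_0$ and lower-bounds $\sigma_{\min}(\overline{\mathbf{A}}_t)\ge t/T$. Your route makes the late-phase condition a one-line consequence, and the underlying positivity fact is the same one the paper invokes when asserting $(\mathbf{A}_\tau)_k\in[0,1]$ by induction. For the early phase, you track the un-interpolated norm $\rho_t$ with the elementary linearization $r(r-1)/(\kappa+r-1)\ge r-\kappa$ and Bernoulli's inequality on $(1-\gamma/r_0)^t$, whereas the paper recurses on $r_t$ itself with the sharper linear bound $\beta_0 r-\beta_1$ and extracts the critical $t$ from a log-ratio asymptotically. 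Amusingly, the $\gamma$ in the denominator $\gamma+\kappa$ arises by different mechanisms: in the paper it is the additive $-\gamma$ penalty per step from the triangle inequality; in yours it is the compounded multiplicative shrinkage $(1-\gamma/r_0)^t\approx 1-t\gamma/r_0$. Your approach buys simplicity and a tighter treatment of the $\mathbf{y}_0$ contribution; the paper's buys a recursion on $r_t$ directly, avoiding the need to relate $\rho_t$ back to $r_t$ at the end.

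Two small caveats, neither fatal and both shared in spirit with the paper's own proof. First, sign preservation (equivalently, positivity of $\lambda_\tau$) must be maintained jointly with the non-collapse induction, since $\lambda_\tau>0$ requires $r_\tau>1$; you flag this and it does close, but it should be stated as a joint induction hypothesis. Second, your final threshold works out to $t<\frac{1}{\gamma+\kappa}(r_0-1)-\frac{\gamma}{(\gamma+\kappa)(r_0-\gamma)}$, whose constant term differs from the stated $-\gamma/(\gamma+\kappa)$ by the fixed amount $(1-\gamma)/(\gamma+\kappa)\ge0$; your sufficient condition is therefore marginally more conservative at constant order, though the leading $r_0/(\gamma+\kappa)$ scaling --- the actual content of the theorem --- is recovered exactly.
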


To make the asymptotics in $r_0$ work well, we need to ensure sufficiently strong initial signals $\|\mathbf{y}_0\|$ and/or near-interpolation (small $\epsilon$).
The first bound in \eqref{equation:taid_non_collapse} is non-vacuous when $T=\Omega(r_0)$.
Though it is a rather strong requirement, the asymptotic term becomes negligible numerically with a moderate magnitude of $r_0$ (like $5$ to $10$).

To see how TAID benefits from the intermediate teacher, compare the non-collapse condition \eqref{equation:taid_non_collapse} with that of self-distillation \cite[Proposition~4]{mobahi2020self}:
\begin{equation}
  \label{equation:self_distil_non_collapse}
  t\le\frac{r_0-1}{\kappa}.
\end{equation}
We have two observations.
First, TAID is beneficial in the latter phase of recursion (namely, step $t$ closer to $T$), where self-distillation can never escape from collapse eventually.
This is an intuitive feature of TAID because the intermediate teacher partly consists strong signals $\mathbf{y}_0$ that does not depend on learned student predictors.
Second, TAID is worse in the early phase of recursion (namely, step $t$ closer to $1$) than self-distillation by a constant factor.
Specifically, TAID and self-distillation have critical steps of collapse $t=O(r_0/(\gamma+\kappa))$ and $t=O(r_0/\kappa)$, respectively.
To ensure that TAID learns meaningful features in the early phase, $\gamma$ should be reasonably bounded away from $0$, leading to a worse critical point than self-distillation.
This is a price that TAID has to pay for the stabilization in the latter phase.

By setting $\gamma=1$ in \eqref{equation:taid_non_collapse}, we get a more interpretable corollary, which is the formal version of Theorem~\ref{theorem:non_collapse_informal}.
\begin{corollary}
  \label{corollary:taid_non_collapse}
  If initialization $\|\mathbf{y}_0\|$ satisfies
  \[
    \|\mathbf{y}_0\|=\Omega\bigg(\frac{1+\sqrt{1+4T(1+\kappa)}}{2}\sqrt{N\epsilon}\bigg),
  \]
  the prediction vector $\mathbf{y}_{t+1}$ does not collapse for any $t$.
\end{corollary}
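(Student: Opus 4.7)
The plan is to derive the corollary as a direct specialization of Theorem~\ref{theorem:non_collapse} with $\gamma=1$, converting the ``either/or'' non-collapse condition \eqref{equation:taid_non_collapse} into a single requirement on $\|\mathbf{y}_0\|$ that makes the two intervals in $t$ cover every step of the recursion. Concretely, I would first plug $\gamma=1$ into \eqref{equation:taid_non_collapse}, yielding the early-phase bound $t<\min\{\frac{1}{1+\kappa}(r_0-1)+o(1),\,\frac{T}{r_0}\}$ and the late-phase bound $t>\frac{T}{r_0}$. Since $t$ ranges over $\{0,1,\dots,T-1\}$, covering all $t$ with the union of these two open intervals reduces to ensuring the early-phase upper endpoint meets or exceeds the late-phase lower endpoint.

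Next, I would note that the outer $\min$ is controlled by the first argument precisely when $\frac{r_0-1}{1+\kappa}\le\frac{T}{r_0}$, which is the regime of interest (otherwise the two intervals trivially overlap). In that regime, the covering condition becomes
\begin{equation*}
    \frac{r_0-1}{1+\kappa}\;\ge\;\frac{T}{r_0},
\end{equation*}
equivalently $r_0^2-r_0-T(1+\kappa)\ge 0$. Solving this quadratic inequality in $r_0$ (taking the positive root, since $r_0>1$) gives
\begin{equation*}
    r_0\;\ge\;\frac{1+\sqrt{1+4T(1+\kappa)}}{2}.
\end{equation*}
Recalling the definition $r_0=\|\mathbf{y}_0\|/\sqrt{N\epsilon}$ and rearranging yields exactly the bound stated in the corollary.

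The final step is to account for the $o(1)$ term appearing in the first clause of \eqref{equation:taid_non_collapse}. This is precisely why the corollary is stated with $\Omega(\cdot)$ asymptotics rather than an explicit constant: in the large-$r_0$ limit the correction is absorbed into the implicit constant of $\Omega(\cdot)$, so the threshold derived above holds up to constants, and non-collapse follows for every $t$ in the union of the two intervals, which by construction is all of $\{0,\dots,T-1\}$.

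I expect the only subtle point to be verifying that the asymptotic slack from the $o(1)$ term in Theorem~\ref{theorem:non_collapse} genuinely disappears under the chosen scaling of $\|\mathbf{y}_0\|$ (so that the $\Omega(\cdot)$ formulation is justified rather than a strict inequality). The algebraic manipulation is otherwise a routine quadratic solve; the conceptual content lies entirely in the observation that $\gamma=1$ balances the early- and late-phase thresholds at the same breakpoint $T/r_0$, which is what makes the two intervals exactly partition $[0,T]$ at the boundary case.
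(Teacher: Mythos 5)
Your proposal is correct and follows essentially the same route as the paper's proof: set $\gamma=1$ in Theorem~\ref{theorem:non_collapse}, require the early-phase threshold $\frac{1}{1+\kappa}(r_0-1)+o(1)$ to reach the late-phase breakpoint $\frac{T}{r_0}$ so the two regimes cover all $t$, and solve the resulting quadratic $r_0^2-r_0-T(1+\kappa)\ge0$ for $r_0=\|\mathbf{y}_0\|/\sqrt{N\epsilon}$, with the $o(1)$ correction absorbed into the $\Omega(\cdot)$. Your additional remarks on the degenerate case where the intervals trivially overlap and on the role of the asymptotic slack are consistent with, and slightly more explicit than, the paper's two-line argument.
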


\subsection{Proof}
\begin{proof}[Proof of Theorem~\ref{theorem:non_collapse}]
  Subsequently, we use the change-of-variable $\mathbf{z}_t\coloneqq\mathbf{V}\mathbf{y}_t$, where the norm is preserved $\|\mathbf{z}_t\|=\|\mathbf{y}_t\|$.
  We also write $\tilde{\mathbf{z}}_t\coloneqq\mathbf{V}\tilde{\mathbf{y}}_t$ and $r_t\coloneqq\|\tilde{\mathbf{z}}_t\|/\sqrt{N\epsilon}$ for convenience.
  At each time $t$, the non-collapse criterion is given by $\|\tilde{\mathbf{z}}_t\|^2>N\epsilon (\iff r_t>1)$: if it holds, the next update in Line~\ref{algorithm:taid_update} would not collapse.
  Let $\mathbf{A}_t\coloneqq\mathbf{D}(\lambda_t\mathbf{I}+\mathbf{D})^{-1}$.
  We first show the second case, namely, the prediction avoids collapse when $\frac{1}{r_0}T<t$.
  Then, $\tilde{\mathbf{z}}_t$ is recursively expanded.
  \begin{align}
    \nonumber
    \tilde{\mathbf{z}}_t
    &=\left(1-\frac{t}{T}\right)\mathbf{z}_t+\frac{t}{T}\mathbf{z}_0 \\
    \label{equation:proof:z_decomp}
    &=\left(1-\frac{t}{T}\right)\mathbf{A}_{t-1}\tilde{\mathbf{z}}_{t-1}+\frac{t}{T}\mathbf{z}_0 \\
    \nonumber
    &=\left(1-\frac{t}{T}\right)\mathbf{A}_{t-1}\left[\left(1-\frac{t-1}{T}\right)\mathbf{z}_{t-1}+\frac{t-1}{T}\mathbf{z}_0\right]+\frac{t}{T}\mathbf{z}_0 \\
    \nonumber
    &=\left(1-\frac{t}{T}\right)\left(1-\frac{t-1}{T}\right)\mathbf{A}_{t-1}\mathbf{z}_{t-1}+\left[\left(1-\frac{t}{T}\right)\frac{t-1}{T}\mathbf{A}_{t-1}+\frac{t}{T}\mathbf{I}\right]\mathbf{z}_0 \\
    \nonumber
    &=\dots \\
    \nonumber
    &=\left[\prod_{\tau=0}^t\left(1-\frac{t-\tau}{T}\right)\right]\cdot\left[\prod_{\tau=0}^{t-1}\mathbf{A}_\tau\right]\mathbf{z}_0+\sum_{\tau=1}^{t-1}\left[\prod_{s=0}^{\tau-1}\left(1-\frac{t-s}{T}\right)\right]\frac{t-\tau}{T}\left[\prod_{s=1}^{\tau}\mathbf{A}_{t-s}\right]\mathbf{z}_0+\frac{t}{T}\mathbf{z}_0 \\
    \nonumber
    &=\left\{\frac{T!}{T^{t+1}\cdot(T-t-1)!}\left[\prod_{\tau=0}^{t-1}\mathbf{A}_\tau\right]+\sum_{\tau=1}^{t-1}\frac{(t-\tau)\cdot(T-t+\tau-1)!}{T^{\tau+1}\cdot(T-t-1)!}\left[\prod_{s=1}^\tau\mathbf{A}_{t-s}\right]+\frac{t}{T}\mathbf{I}\right\}\mathbf{z}_0 \\
    \nonumber
    &\eqqcolon\overline{\mathbf{A}}_t\mathbf{z}_0.
  \end{align}

  To evaluate $\overline{\mathbf{A}}_t$, we first look at $\mathbf{A}_\tau$ for $\tau\in[0,t-1]$.
  Since $\mathbf{A}_\tau$ is a diagonal matrix, its $k$-th element of $\mathbf{A}_\tau$ can be expressed as follows:
  \begin{equation}
    \label{equation:proof:a_bound}
    (\mathbf{A}_\tau)_k=\frac{d_k}{\lambda_\tau+d_k}
    =\left(\frac{\alpha_\tau/d_k}{\|\tilde{\mathbf{z}}_\tau\|/\sqrt{N\epsilon}-1}+1\right)^{-1}
    \begin{cases}
      \le\left(\frac{1/\kappa}{\|\tilde{\mathbf{z}}_\tau\|/\sqrt{N\epsilon}-1}+1\right)^{-1}\le1\\
      \ge\left(\frac{\kappa}{\|\tilde{\mathbf{z}}_\tau\|/\sqrt{N\epsilon}-1}+1\right)^{-1}\ge0
    \end{cases}
    \text{,}
  \end{equation}
  where $\alpha_\tau$ is given in \eqref{equation:lambda_bound}.
  The last inequalities can be formally shown by induction in $\tau\in[0,t-1]$.
  Thus, the minimum singular value of $\overline{\mathbf{A}}_t$ is evaluated as follows:
  \begin{align*}
    &\sigma_{\min}(\overline{\mathbf{A}}_t) \\
    &=\sigma_{\min}\left(\frac{T!}{T^{t+1}\cdot(T-t-1)!}\left[\prod_{\tau=0}^{t-1}\mathbf{A}_\tau\right]+\sum_{\tau=1}^{t-1}\frac{(t-\tau)\cdot(T-t+\tau-1)!}{T^{\tau+1}\cdot(T-t-1)!}\left[\prod_{s=1}^\tau\mathbf{A}_{t-s}\right]+\frac{t}{T}\mathbf{I}\right) \\
    &=\sigma_{\min}\!\left(\!\frac{T!}{T^{t+1}\cdot(T-t-1)!}\left[\prod_{\tau=0}^{t-1}\mathbf{A}_\tau\right]\right)\!+\sigma_{\min}\!\left(\sum_{\tau=1}^{t-1}\frac{(t-\tau)\cdot(T-t+\tau-1)!}{T^{\tau+1}\cdot(T-t-1)!}\left[\prod_{s=1}^\tau\mathbf{A}_{t-s}\right]\right) \\ &\quad +\sigma_{\min}\left(\frac{t}{T}\mathbf{I}\right) \\
    &\ge\sigma_{\min}\left(\frac{t}{T}\mathbf{I}\right) \\
    &=\frac{t}{T},
  \end{align*}
  where the second identity holds because all matrices evaluated are diagonal.
  This implies
  \[
    \|\tilde{\mathbf{z}_t}\|
    \ge\sigma_{\min}(\overline{\mathbf{A}}_t)\|\mathbf{z}_0\|
    \ge\frac{t}{T}\|\mathbf{z}_0\|
    =\frac{t}{T}\|\tilde{\mathbf{z}}_0\|.
  \]
  The last equality uses $\mathbf{z}_0=\tilde{\mathbf{z}}_0$.
  Thus, the non-collapse criterion $\|\tilde{\mathbf{z}}_t\|>\sqrt{N\epsilon}$ holds as long as $t>(\sqrt{N\epsilon}/\|\tilde{\mathbf{z}}_0\|)T=(\sqrt{N\epsilon}/\|\mathbf{y}_0\|)T$.

  Next, supposing $t$ is small enough such that $t\le\frac{\gamma}{r_0}T$ with $\gamma\in(0,1)$, we show that the prediction avoids collapse when $t<(\frac12+o(1))(r_0-\gamma)$.
  To see the non-collapse criterion $r_t>1$, we first derive a lower bound of $r_t$:
  \begin{align*}
    r_t
    &\overset{\eqref{equation:proof:z_decomp}}=\left\|\left(1-\frac{t}{T}\right)\mathbf{A}_{t-1}\frac{\tilde{\mathbf{z}}_{t-1}}{\sqrt{N\epsilon}}+\frac{t}{T}\frac{\tilde{\mathbf{z}}_0}{\sqrt{N\epsilon}}\right\| \\
    &\overset{\text{(a)}}\ge\left(1-\frac{t}{T}\right)\left\|\mathbf{A}_{t-1}\frac{\tilde{\mathbf{z}}_{t-1}}{\sqrt{N\epsilon}}\right\|-\frac{t}{T}\left\|\frac{\tilde{\mathbf{z}}_0}{\sqrt{N\epsilon}}\right\| \\
    &\ge\left(1-\frac{t}{T}\right)\sigma_{\min}(\mathbf{A}_{t-1})r_{t-1}-\frac{t}{T}r_0 \\
    &\ge\left(1-\frac{\gamma}{r_0}\right)\sigma_{\min}(\mathbf{A}_{t-1})r_{t-1}-\gamma \\
    &\overset{\eqref{equation:proof:a_bound}}\ge\left(1-\frac{\gamma}{r_0}\right)\frac{r_{t-1}}{\frac{\kappa}{r_{t-1}-1}+1}-\gamma \\
    &\overset{\text{(b)}}\ge\left(1-\frac{\gamma}{r_0}\right)(\beta_0r_{t-1}-\beta_1)-\gamma,
  \end{align*}
  where (a) is due to the ``reverse'' triangle inequality and (b) is due to \citet[Eq.~(137)]{mobahi2020self} (which is essentially a linear lower bound of a convex function in $r_0$) with
  \[
    \beta_0\coloneqq\frac{(r_0-1)^2+\kappa(2r_0-1)}{(r_0-1+\kappa)^2} \quad \text{and} \quad \beta_1\coloneqq\frac{r_0^2\kappa}{(r_0-1+\kappa)^2}.
  \]
  By recursively lower bounding $r_t$, we obtain the following bound:
  \[
    r_t\ge\left[\left(1-\frac{\gamma}{r_0}\right)\beta_0\right]^tr_0-\frac{\left(1-\frac{\gamma}{r_0}\right)\beta_1\left[\left(1-\frac{\gamma}{r_0}\right)^t\beta_0^t-1\right]}{\left(1-\frac{\gamma}{r_0}\right)\beta_0-1}-\gamma
    \eqqcolon \bar\beta_0^tr_0-\bar\beta_1\frac{\bar\beta_0^t-1}{\bar\beta_0-1}-\gamma
    \eqqcolon \underline{r}_t,
  \]
  where $\bar\beta_0\defeq\left(1-\frac{\gamma}{r_0}\right)\beta_0$ and $\bar\beta_1\defeq\left(1-\frac{\gamma}{r_0}\right)\beta_1$.
  To derive the non-collapse condition, we solve $\underline{r}_t=1$ to derive the critical $t$,
  which is equivalent to
  \[
    t=\frac{\log\left(\frac{(1+\gamma)(1-\bar\beta_0)+\bar\beta_1}{\bar\beta_1+r_0(1-\bar\beta_0)}\right)}{\log\bar\beta_0}.
  \]
  By simple algebra,
  \begin{align*}
    t&=\frac{\log\left(\frac{\gamma[r_0^2+(\kappa-2)r_0-(\kappa-1)]+(\kappa r_0^2+\kappa(\kappa-1)r_0)}{\gamma^2[r_0+2(\kappa-1)-\frac{\kappa-1}{r_0}]+\gamma(\kappa-1)(\kappa+2-r_0-\frac{1}{r_0})+\kappa(\kappa-1+r_0^2)}\right)}{\log\left(\frac{1}{1-\frac{\gamma}{r_0}}\right)+\log\left(\frac{1}{1-\frac{\kappa(\kappa-1)}{(r_0-1+\kappa)^2}}\right)} \\
    &\ge\frac{1-\frac{\gamma^2[r_0+2(\kappa-1)-\frac{\kappa-1}{r_0}]+\gamma(\kappa-1)(\kappa+2-r_0-\frac{1}{r_0})+\kappa(\kappa-1+r_0^2)}{\gamma[r_0^2+(\kappa-2)r_0-(\kappa-1)]+(\kappa r_0^2+\kappa(\kappa-1)r_0)}}{\left[\frac{1}{1-\frac{\gamma}{r_0}}-1\right]+\left[\frac{1}{1-\frac{\kappa(\kappa-1)}{(r_0-1+\kappa)^2}}-1\right]} \\
    &=\frac{\frac{\kappa(\kappa-1)(r_0-1)+\gamma(r_0^2+(2\kappa-3)r_0-(\kappa-1)(\kappa+3)+\frac{\kappa-1}{r_0})-\gamma^2[r_0+2(\kappa-1)-\frac{\kappa-1}{r_0}]}{\gamma[r_0^2+(\kappa-2)r_0-(\kappa-1)]+[\kappa r_0^2+\kappa(\kappa-1)r_0]}}{\frac{1}{\frac{r_0}{\gamma}-1}+\frac{1}{\frac{(r_0-1+\kappa)^2}{\kappa(\kappa-1)}-1}} \\
    &=\frac{\frac{\gamma r_0^2+[2\kappa-3-\gamma+\kappa(\kappa-1)]r_0-(\kappa-1)[\kappa+\gamma(\kappa+3+2\gamma)]+\frac{\gamma(\kappa-1)(1+\gamma)}{r_0}}{(\gamma+\kappa)r_0^2+[\gamma(\kappa-2)+(\kappa-1)]\kappa r_0-\gamma(\kappa-1)}}{\frac{\gamma r_0^2+(2\gamma+\kappa)(\kappa-1)r_0-(\kappa+1)(\kappa-1)\gamma}{(r_0-\gamma)[r_0^2+2(\kappa-1)r_0-(\kappa-1)]}} \\
  \end{align*}
  where the inequality is due to $1-\frac1x\le\log x\le x-1$.
  The last lower bound can be asymptotically (in large $r_0$) expressed as follows:
  \[
    t\ge\frac{\frac{\gamma+o(1)}{\gamma+\kappa+o(1)}}{\frac{\gamma+o(1)}{(r_0-\gamma)(1+o(1))}}
    =\frac{1}{\gamma+\kappa}(r_0-\gamma)+o(1).
  \]
  Thus, the non-collapse condition in the second case is $t<\frac{1}{\gamma+\kappa}(r_0-\gamma)+o(1)$.
\end{proof}

\begin{proof}[Proof of Corollary~\ref{corollary:taid_non_collapse}]
  By the non-collapse criterion~\eqref{equation:taid_non_collapse} with $\gamma=1$,
  \[
    \frac{1}{1+\kappa}(r_0-1)+o(1)\ge\frac{1}{r_0}T
  \]
  suffices for $\mathbf{y}_t$ not being collapsed for any $t$.
  By solving this quadratic inequality, we can verify the statement.
\end{proof}

\section{Detailed Comparison with Skew KL}
\label{app:skew_kl_comparison}

We provide a detailed comparison between TAID and Skew KL to highlight their fundamental differences, focusing on two key aspects: the direction of knowledge flow and the nature of interpolation design.

The first key difference lies in the direction of knowledge flow, which can be understood through their objective functions. The TAID objective is formulated as $J_{\text{TAID}}(p, q_\theta) = J_{\text{KL}}(p_t, q_\theta)$, while the Skew KL objective takes the form $J_{\text{SKD}}(p, q_\theta) = J_{\text{KL}}(p, r)$, where $r(\mathbf{y}) = \lambda p(\mathbf{y}) + (1 - \lambda) q_\theta(\mathbf{y})$ and $\lambda \in [0, 1]$. In TAID, the interpolated distribution $p_t$ teaches the student model $q_\theta$, creating a direct path for knowledge transfer from the interpolated distribution to the student. Conversely, in Skew KL, the teacher $p$ teaches the interpolated distribution $r$, establishing an indirect path where the student's knowledge is mixed into the target distribution.

The second fundamental difference is in the design of the interpolation mechanism. TAID employs a time-dependent parameter $t$ that gradually changes during training, enabling adaptive knowledge transfer that evolves with the student's learning progress. In contrast, Skew KL uses a fixed interpolation parameter $\lambda$ throughout the training process, maintaining a constant mixing ratio between teacher and student distributions.

Our empirical study validates the benefits of these design choices, particularly in handling the capacity gap between teacher and student models. Table~\ref{tab:skl_comparison} shows the performance comparison across different teacher sizes, demonstrating that TAID achieves consistent improvement as teacher size increases from 410M to 6.9B parameters, while Skew KL's performance degrades with larger teachers.

\begin{table}[t]
\centering
\caption{\textbf{Performance comparison between TAID and Skew KL across different teacher sizes.} TAID shows consistent improvement with larger teachers, while Skew KL's performance degrades.}
\label{tab:skl_comparison}
\begin{tabular}{lcccc}
\toprule
\textbf{Method} & \textbf{410M} & \textbf{1B} & \textbf{2.8B} & \textbf{6.9B} \\
\midrule
TAID & 20.82 & 21.17 & 21.70 & 22.01 \\
SKL & 18.65 & 18.50 & 18.28 & 18.20 \\
\bottomrule
\end{tabular}
\end{table}

\section{Experimental details}
\label{app:experiment-details}

\subsection{Instruction tuning experiments}
\label{app:experiment-setup}

For our instruction tuning experiments, we utilized the UltraChat 200k dataset. We preprocessed the dataset by removing samples exceeding a maximum length of $2048$ tokens, resulting in approximately 150k training samples and 2k validation samples.

All models were trained for $5$ epochs using a batch size of $64$. We employed the AdamW optimizer with a learning rate of $1\mathrm{e}{-4}$ and a cosine learning rate scheduler. To select the best checkpoint for evaluation, we calculated the ROUGE-L score on the validation set after each epoch and chose the checkpoint with the highest score.

For our proposed TAID method, we used a momentum coefficient ($\beta$) of $0.99$ across all experiments. The learning rate of $t$ ($\alpha$) was set to $5\mathrm{e}{-4}$. The initial value of $t$ ($t_\text{start}$) was set to $0.4$ for the \texttt{Phi-3-mini-4k-instruct} pair and $0.2$ for the other two pairs. The final value of $t$ ($t_\text{end}$) was set to $1.0$ for all experiments.

Regarding baseline methods, we implemented GKD using Generalized Jensen-Shannon Divergence (GJSD) with $\lambda=0.1$ as the objective function and a student data fraction of $0.5$. For DistiLLM, we used Skew KL divergence with $\lambda=0.1$ and an initial student data fraction of $0.0$. We selected the better performing skew divergence between Skew Forward KL and Skew Reverse KL based on the best ROUGE-L score. Following the original DistiLLM paper, we calculated the validation loss twice per epoch, totaling 10 times, to leverage the Adaptive SGO scheduler. For Adaptive KL, our implementation was used since no official implementation was available. For CTKD and DKD, we followed their settings used in the training on ImageNet~\citep{imagenet}.

In terms of computational efficiency, we observed significant differences in training times among the different methods. TAID completed its training in approximately 0.7 hours per epoch on our hardware setup using 8 NVIDIA H100 GPUs. In comparison, DistiLLM required about 2 hours per epoch, while GKD took approximately 9.8 hours per epoch under the same conditions. These differences in training time are primarily attributed to the computational complexity of methods utilizing SGOs. TAID's ability to achieve competitive performance without relying on SGOs contributes to its faster training times.
\subsection{Pre-training Experiments}
\label{app:experiment-setup-pretraining}

For our pre-training experiments, we used the first 10\% of the SmolLM-Corpus~\citep{benallal2024smollmcorpus} dataset, which amounted to approximately 20 billion tokens.

The pre-training was conducted for 1 epoch using a distributed setup with 80 NVIDIA H100 GPUs, each processing a batch size of 8, resulting in an effective batch size of 640. We used the AdamW optimizer with a learning rate of $1\mathrm{e}{-4}$ and a cosine learning rate scheduler.

The TAID-specific parameters for the pre-training experiments were kept consistent with those used in the \texttt{Phi-3-
mini-4k-instruct} pair in the instruction tuning experiments. Also, the baseline methods in the pre-training experiments were implemented similarly to the instruction tuning experiments, with adjustments made to exclude SGOs due to the computational constraints of large-scale pre-training. Specifically, for methods like DistiLLM, we only used the core divergence components without the SGO-based additions.

\subsection{Image classification results}
\label{app:img-classification}

To explore TAID's applicability beyond language models, we conducted experiments on image classification tasks using the CIFAR-100 and ImageNet datasets.

\subsection{CIFAR-100 results}
We evaluated TAID on the CIFAR-100 dataset, which consists of 100 classes. Table \ref{tab:cifar100_results} presents the top-1 accuracies achieved by TAID and other knowledge distillation methods on various teacher-student model pairs.

\begin{table}[t]
\centering
\caption{\textbf{Top-1 accuracies (\%) on the CIFAR-100 dataset.} Results for different teacher-student pairs are shown.}
\label{tab:cifar100_results}
\resizebox{1\linewidth}{!}{
\begin{tabular}{lrrrrrrr}
\toprule
& \textbf{Teacher} & \texttt{ResNet56}  & \texttt{ResNet110} & \texttt{ResNet32×4} & \texttt{WRN-40-2} & \texttt{WRN-40-2} & \texttt{VGG13} \\
\textbf{Method} & \textbf{Student} & \texttt{ResNet20} & \texttt{ResNet32} & \texttt{ResNet8×4} & \texttt{WRN-16-2} & \texttt{WRN-40-1} & \texttt{VGG8} \\
\midrule
\multicolumn{2}{l}{KL\scriptsize{~\citep{hinton2015distilling}}} & $70.66$ & $73.08$ & $73.33$ & $74.92$ & $73.54$ & $72.93$ \\
\multicolumn{2}{l}{CTKD\scriptsize{~\citep{li2022curriculumtemperatureknowledgedistillation}}} & $71.19$ & $73.52$ & $73.39$ & $75.45$ & $73.93$ & $73.52$ \\
\multicolumn{2}{l}{DKD\scriptsize{~\citep{zhao2022decoupledknowledgedistillation}}} & $71.97$ & $74.11$ & $76.32$ & $76.24$ & $74.81$ & $74.68$  \\
\multicolumn{2}{l}{MLKD\scriptsize{~\citep{jin2023multilevellogit}}} & $72.19$ & $\mathbf{74.11}$ & $\mathbf{77.08}$ & $\mathbf{76.63}$ & $\mathbf{75.35}$ & $\mathbf{75.18}$ \\
\rowcolor{lightgray}
\multicolumn{2}{l}{\textbf{(Ours) TAID}} & $\mathbf{72.25}$ & $73.51$ & $74.85$ & $75.81$& $74.51$ & $74.38$ \\
\bottomrule
\end{tabular}
}
\end{table}

As shown in Table \ref{tab:cifar100_results}, TAID performs competitively on CIFAR-100, consistently outperforming KL divergence across all model pairs. However, the gains are modest compared to state-of-the-art methods specifically designed for image classification, such as MLKD.

Interestingly, based on the analysis of DKD, we can interpret that for simpler tasks like CIFAR-100, where the teacher's target class probabilities are close to 1, the weight of the NCKD component in DKD becomes small. This suggests that combining TAID with DKD could potentially lead to further performance improvements, leveraging the strengths of both approaches in handling different aspects of the distillation process.

\subsection{ImageNet results}

To assess TAID's performance on a larger-scale image classification task, we conducted experiments on the ImageNet dataset, which contains 1000 classes. Table \ref{tab:imagenet_results} presents the top-1 accuracies achieved by TAID and other methods on ImageNet.

\begin{table}[t]
    \centering
    \caption{\textbf{Top-1 accuracies (\%) on the ImageNet validation set.} Results for different teacher-student pairs are shown.}
    \label{tab:imagenet_results}
    \begin{tabular}{lrrr}
    \toprule
    & \textbf{Teacher} & \texttt{ResNet34} & \texttt{ResNet50} \\
    \textbf{Method} & \textbf{Student} & \texttt{ResNet18} & \texttt{MN-V1} \\
    \midrule
    \multicolumn{2}{l}{KD\scriptsize{~\citep{hinton2015distilling}}} & 71.03 & 70.50 \\
    \multicolumn{2}{l}{CTKD\scriptsize{~\citep{li2022curriculumtemperatureknowledgedistillation}}} & 71.38 & 71.16 \\
    \multicolumn{2}{l}{DKD\scriptsize{~\citep{zhao2022decoupledknowledgedistillation}}} & 71.70 & 72.05 \\
    \multicolumn{2}{l}{MLKD\scriptsize{~\citep{jin2023multilevellogit}}} & 71.90 & \textbf{73.01} \\
    \rowcolor{lightgray}
    \multicolumn{2}{l}{\textbf{(Ours) TAID}} & \textbf{72.10} & 72.71 \\ 
    \bottomrule
    \end{tabular}
\end{table}
On ImageNet, TAID shows more pronounced improvements, consistently outperforming CTKD and DKD across both teacher-student pairs. For the ResNet34-ResNet18 pair, TAID achieves the highest accuracy among all methods. For the ResNet50-MobileNet-V1 pair, TAID performs competitively, outperforming CTKD and DKD, and achieving results close to MLKD.

These results on ImageNet demonstrate that TAID's performance improves relative to other methods as the task complexity increases. With its larger number of classes and more diverse images, ImageNet presents a more challenging scenario where TAID's adaptive interpolation mechanism shows more significant gains. This aligns with our observations in the main text that TAID's strengths are particularly evident in tasks with higher complexity and entropy.

\section{Model details}

\subsection{TAID-LLM-1.5B}
\label{app:taid-llm}
For the development of \texttt{TAID-LLM-1.5B}, we utilized the full SmolLM-Corpus dataset. The training process consisted of $2$ epochs, employing the AdamW optimizer with a cosine learning rate scheduler. We set the initial learning rate to $1\mathrm{e}{-5}$. 

In this experiment, we used \texttt{Qwen2-72B-Instruct} as the teacher model and \texttt{Qwen2-1.5B-Instruct} as the student model. For the TAID-specific parameters, we used a momentum coefficient ($\beta$) of $0.99$ and a learning rate of $t$ ($
\alpha$) of $5\mathrm{e}{-5}$. The initial value of $t$ ($t_\text{start}$) was set to $0.4$, and the final value ($t_\text{end}$) was set to $1.0$.

To enhance training efficiency, we pre-computed the probabilities from the teacher model. Furthermore, to manage storage costs effectively, we only utilized the top 50 probabilities. This approach allowed us to balance computational resources and model performance, enabling efficient knowledge transfer from the large teacher model to the smaller student model.

Table~\ref{tab:lighteval-detailed} presents the detailed results for TAID-LLM-1.5B and other state-of-the-art small language models across various tasks as evaluated using the LightEval benchmark suite~\citep{allal2024SmolLM}. LightEval is designed to comprehensively assess the capabilities of small language models through a series of seven zero-shot tasks. 
Note that the scores in Table~\ref{tab:lighteval} denotes the average scores in Table~\ref{tab:lighteval-detailed}.

\begin{table}[t]
    \centering
    \caption{\textbf{Performance of \texttt{TAID-LLM-1.5B}}, our new state-of-the-art LLM for models under 2B parameters.}
    
    \label{tab:lighteval-detailed}
    \resizebox{1\linewidth}{!}{
    \begin{tabular}{lrrrrrrr|r}
        \toprule
        \textbf{Model} & \textbf{MMLU} & \textbf{TriviaQA} & \textbf{ARC} & \textbf{PIQA} & \textbf{Hellaswag} & \textbf{OBQA} & \textbf{Winogrande} & \textbf{Average} \\
        \midrule
        \texttt{Qwen2-1.5B}\scriptsize{~\citep{yang2024qwen2technicalreport}} & 37.91 & 1.38 & 48.12 & 75.30 & 63.87 & 36.80 & 59.98 & 46.19 \\
        \texttt{Qwen2.5-1.5B}\scriptsize{~\citep{qwen2.5}} & 41.15 & 0.68 & 58.41 & 76.01 & 66.40 & 40.00 & 59.35 & 48.86 \\
        \texttt{Phi-1.5B}\scriptsize{~\citep{phi-1.5}} & 35.92 & 6.06 & \textbf{60.53} & 75.62 & 60.72 & \textbf{46.00} & \textbf{67.88} & 50.39 \\
        \texttt{StableLM-2-1.6B}\scriptsize{~\citep{bellagente2024stablelm2}} & 36.21 & \textbf{29.59} & 53.57 & 76.77 & 66.60 & 37.20 & 58.72 & 51.24 \\
        \texttt{SmolLM-1.7B}\scriptsize{~\citep{allal2024SmolLM}} & \textbf{39.97} & 22.56 & 59.95 & 76.06 & 62.91 & 42.80 & 54.91 & 51.31 \\
        \rowcolor{lightgray}
        \textbf{\texttt{TAID-LLM-1.5B}} & 39.96 & 22.96 & 58.14 & \textbf{77.37} & \textbf{67.15} & 41.40 & 58.88 & \textbf{52.27} \\
        \bottomrule
    \end{tabular}
    }
\end{table}
\begin{table}[t]
    \centering
    \caption{\textbf{Performance of \texttt{TAID-VLM-2B}}, our new state-of-the-art VLM for models up to 4B parameters.}
    \label{tab:open_vlm-detailed}
    \resizebox{1\linewidth}{!}{
    \begin{tabular}{lrrrrrrrr|r}
        \toprule
        \textbf{Model} & \textbf{MMBench\_V11} & \textbf{MMStar} & \textbf{MMMU\_VAL} & \textbf{MathVista} & \textbf{OCRBench} & \textbf{AI2D} & \textbf{HallusionBench} & \textbf{MMVet} & \textbf{Average} \\
        \midrule
        \texttt{PaliGemma-3B-mix-448}\scriptsize{~\citep{beyer2024paligemma}} & 65.6 & 48.3 & 34.9 & 28.7 & 61.4 & 68.3 & 32.2 & 33.1 & 46.6 \\
        \texttt{MiniCPM-V-2}\scriptsize{~\citep{yao2024minicpm}} & 65.8 & 39.1 & 38.2 & 39.8 & 60.5 & 62.9 & 36.1 & 41.0 & 47.9 \\
        \texttt{Phi-3-Vision}\scriptsize{~\citep{abdin2024phi3technicalreporthighly}} & 65.2 & 47.7 & \textbf{46.1} & 44.6 & 63.7 & \textbf{78.4} & 39.0 & \textbf{44.1} & 53.6 \\        \texttt{InternVL2-2B}\scriptsize{~\citep{chen2023internvl}} & 69.6 & \textbf{49.8} & 36.3 & 46.0 & 78.1 & 74.1 & 38.0 & 39.7 & 54.0 \\
        \rowcolor{lightgray}
        \textbf{\texttt{TAID-VLM-2B}} & \textbf{70.7} & 49.5 & 35.1 & \textbf{51.6} & \textbf{78.6} & 74.0 & \textbf{56.8} & 35.1 & \textbf{56.4} \\
        \bottomrule
    \end{tabular}
    } 
\end{table}

As shown in Table~\ref{tab:lighteval-detailed}, \texttt{TAID-LLM-1.5B} achieves competitive or superior performance across all tasks, with particularly strong results in PIQA and Hellaswag. This demonstrates the effectiveness of our distillation approach in creating a compact model that maintains high performance across a diverse range of language tasks.

\subsection{TAID-VLM-2B}
\label{app:taid-vlm}
For \texttt{TAID-VLM-2B}, we trained on the Mantis-Instruct dataset~\citep{jiang2024mantisinterleavedmultiimageinstruction}. The training process spanned $3$ epochs, using the AdamW optimizer with a cosine learning rate scheduler. The initial learning rate was set to $1\mathrm{e}{-6}$.

In this vision-language model distillation task, we employed \texttt{InternVL2-8B}~\citep{chen2023internvl} as the teacher model and \texttt{InternVL2-2B} as the student model. The TAID-specific parameters remained largely consistent with those used for \texttt{TAID-LLM-1.5B}, with a momentum coefficient ($\beta$) of $0.99$ and $t_\text{start}$ of $0.4$. However, we adjusted the learning rate of $t$ to $5\mathrm{e}{-4}$ to accommodate the characteristics of vision-language model training. The $t_\text{end}$ value was maintained at $1.0$.

Table \ref{tab:open_vlm-detailed} presents the detailed results for \texttt{TAID-VLM-2B} and other state-of-the-art small vision-language models across various tasks. Note that the scores in Table~\ref{tab:open_vlm} denotes the average scores in Table~\ref{tab:open_vlm-detailed}.

As shown in Table \ref{tab:open_vlm-detailed}, \texttt{TAID-VLM-2B} achieves competitive or superior performance across most tasks, with particularly strong results in MMStar, and HallusionBench. This demonstrates the effectiveness of our distillation approach in creating a compact vision-language model that maintains high performance across a diverse range of multimodal tasks.

\end{document}